\documentclass[lettersize,journal]{IEEEtran}
\usepackage{amsmath,amsfonts}
\usepackage{algorithmic}
\usepackage{algorithm}
\usepackage{array}
\usepackage[caption=false,font=normalsize,labelfont=sf,textfont=sf]{subfig}
\usepackage{textcomp}
\usepackage{stfloats}
\usepackage{url}
\usepackage{verbatim}
\usepackage{graphicx}
\usepackage{booktabs}
\usepackage{amsthm}
\usepackage{cite}
\usepackage{xcolor}
\usepackage{xr}
\usepackage{hyperref}
\makeatletter
\newcommand*{\addFileDependency}[1]{%
  \typeout{(#1)}%
  \@addtofilelist{#1}%
  \IfFileExists{#1}{}{\typeout{No file #1.}}%
}
\makeatother

\newcommand*{\myexternaldocument}[1]{%
  \externaldocument[S-]{#1}%
  \addFileDependency{#1.tex}%
  \addFileDependency{#1.aux}%
}

\myexternaldocument{supplementary}

\newtheorem{theorem}{Theorem}

\begin{document}

\title{End-to-End Optimization of Incoherent Imaging for Classification Under Detector-Limited Readout}



\author{\IEEEauthorblockN{Archer Wang, Joshua Chen, Sachin Vaidya, Marin Soljačić}%
\thanks{Archer Wang, Joshua Chen, Sachin Vaidya, and Marin Solja\v{c}i\'{c}
are with the Research Laboratory of Electronics, Massachusetts Institute of
Technology, Cambridge, MA 02139 USA
(e-mail: archerdw@mit.edu; chenjosh@mit.edu; svaidya1@mit.edu; soljacic@mit.edu).}
\thanks{Marin Solja\v{c}i\'{c} is also with the Department of Physics,
Massachusetts Institute of Technology, Cambridge, MA 02139 USA.}%
}



\maketitle

\begin{abstract}
End-to-end co-optimization of optical front-ends (e.g. metasurfaces) and neural network back-ends has been widely applied to imaging tasks, yet a formalism characterizing when and why such systems outperform conventional lens-based imaging is largely lacking. This paper focuses on object classification, a central imaging task, and asks when end-to-end optimization of a phase mask for incoherent imaging improves performance over a conventional focusing lens. We find that these gains arise primarily under constrained detector readout and are limited under full detector readout. In the latter setting, we prove that no incoherent phase mask exceeds the ideal-channel mutual information between detector measurements and class labels; a conventional focusing lens approaches this ceiling, and joint optimization yields no empirical gain. When detector readout is constrained---by coarse spatial sampling or a limited number of measurements---optimized optics can substantially improve classification by increasing class separability in the detector measurements. These gains are largest under low detector noise and shrink as noise grows, because the optics shape the signal before it reaches the detector but cannot remove noise added afterward. The advantage also depends on the spectral structure of the task: co-design helps most when class-discriminative content is concentrated at lower spatial frequencies than within-class variation. We develop a theoretical framework formalizing these distinctions and test its predictions on synthetic data and standard benchmarks (MNIST, FashionMNIST, SVHN).
\end{abstract}

\begin{IEEEkeywords}
End-to-end design, metasurfaces, image classification, 
incoherent imaging, mutual information, detector-limited sensing
\end{IEEEkeywords}

\section{Introduction}
\IEEEPARstart{O}{ptical} imaging systems are increasingly being
co-designed with downstream learning pipelines for inference tasks\cite{arya2024metasurfaces,Fisher2025,Molesky2018,Piggott2015,chen2026wavefront}. In this paradigm, an optical front-end---such as a diffractive stack, metasurface, or compound optic---is jointly optimized with a digital back-end to minimize a task-specific objective, often by rendering the image-formation model differentiable to enable gradient-based optimization. This departs from the conventional lens-based pipeline, in which a standard focusing lens projects an image onto the detector and subsequent processing is performed entirely in the digital domain. 

In many practical inference systems, recording and processing a dense, full-resolution detector image before classification is costly. Reading out a dense detector array, digitizing every pixel, and moving those measurements through the system can impose substantial latency, power, and bandwidth costs. In high-speed perception settings, these costs directly limit response time; in embedded and always-on platforms, they can dominate the energy budget; and in non-visible wavelength regimes, dense detector arrays may be costly or difficult to scale~\cite{baek2025edge,choi2025fsoe,gehrig2024event,gibson2020singlepixel,stantchev2020thz}. For example, a concrete application is high-throughput industrial inspection
and sorting, where classification systems must identify defects, materials, or
product categories on rapidly moving assembly lines or sorting
streams~\cite{malamas2003survey,golnabi2007industrial}. In settings like these, the system may be forced to operate with coarse spatial sampling or a limited number of detector measurements. Hence, the design objective should shift from producing a visually faithful image at the detector to ensuring that limited detector measurements preserve class-relevant information. A further consideration in these fast-vision systems is detector noise. Short exposure times limit the signal per measurement, making noise added at the detector non-negligible relative to the signal. The amount and spatial pattern of detector readout then become design choices, trading off signal capture against readout cost.

Much prior work on end-to-end optical design has focused on computational cameras, where optics and post-processing are jointly optimized for classical imaging objectives such as extended depth of field, chromatic correction, and super-resolution~\cite{Sitzmann2018,Tseng2021Compound,Colburn2018,Tseng2021Nano,Min2025MulticolorScintillators}. These works establish the broader paradigm of differentiable optical--digital co-design, but they do not directly address when such co-design improves downstream classification relative to a conventional focusing lens, particularly under detector readout constraints.

More directly relevant are works that target task-specific inference in the optical domain. Diffractive optical neural networks implement learned linear transformations via cascaded diffractive layers and can be trained end-to-end for classification or other inference tasks~\cite{Lin2018D2NN,Luo2019}. Related hybrid approaches place an optimized optical front-end before digital neural networks to accelerate inference or reduce energy consumption in early-layer computation~\cite{Colburn2019}. Collectively, these results suggest that when the objective is task performance rather than pixel fidelity, the optimal measurement operator can differ substantially from a conventional focusing lens.

This motivates the central question of this work: for object classification, under what conditions does optical--computational co-design provide a meaningful advantage over a conventional focusing lens pipeline, and how large can that advantage be?

\subsection{Contributions}
Recent work has examined modern imaging and optics pipelines 
through the lens of classical information theory\cite{pinkard2025information,kabuli2026lensless,markley2025efficient,Hamerly2025}. In that spirit, we study 
image classification under a range of detector readout schemes, 
spanning full readout to spatially aggregated and sparse 
measurements. Prior work has demonstrated that accurate 
classification is achievable using only a sparse subset of 
detector area\cite{jaeger1994sparsedatascan,mennel2022sparsepixel,brunton2013optimalsensorplacement,kutz2013datadriven}; however, these approaches do not jointly optimize the optical front-end or account for detector noise. Our work addresses 
both gaps.

Our main contributions are as follows:
\begin{itemize}
    \item We prove that under full detector readout, no passive incoherent phase mask exceeds the mutual information of the ideal imaging channel (Section~ \ref{sub:mi_proof}), which we approximate in our simulations with a conventional focusing lens.
    
    \item We derive a mathematical framework showing how the scene spatial-frequency content, optical transfer function, detector readout, and detector noise jointly determine class separability and classification performance. The analysis also predicts that co-design helps 
most when the class-mean difference is concentrated at 
lower spatial frequencies than the within-class 
variation, and we test this prediction with 
frequency-shifted synthetic controls.
    
    \item We validate these predictions through toy Gaussian-model experiments, frequency-shifted synthetic controls, and standard benchmark datasets, showing the largest gains under masked readout and low detector noise, and limited benefit under full detector readout, block-sum readout, or higher detector noise.
\end{itemize}

\begin{figure}[t]
\centering
\subfloat[]
\hfill
\includegraphics[width=0.9\linewidth]{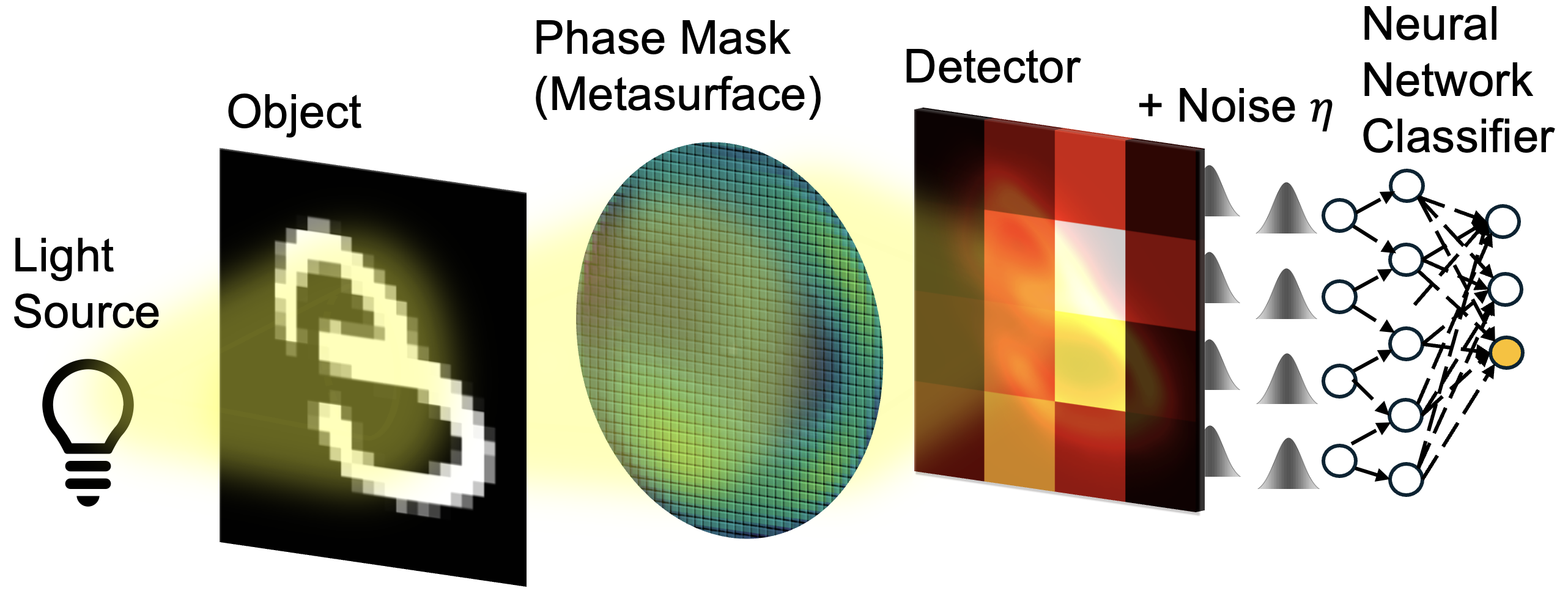}
\subfloat[]
\hfill
\includegraphics[width=0.9\linewidth]{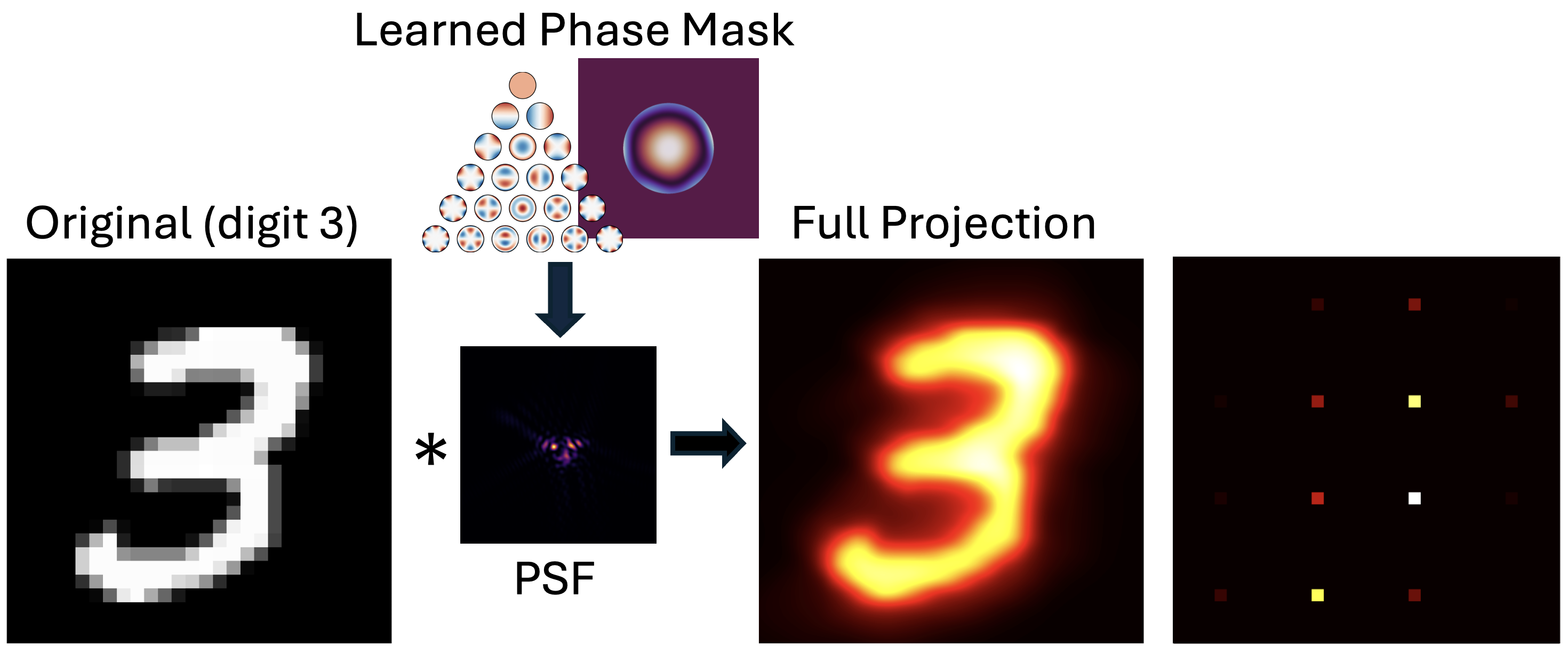}
\caption{Incoherent imaging and classification pipeline. (a) End-to-end system overview: an input object intensity is optically convolved with the system point spread function (PSF) induced by a learned phase mask, producing an intensity distribution at the detector. Measurement noise is then added, and the resulting readout vector is passed to a neural network classifier. The phase mask and classifier are trained jointly end-to-end. (b) Example of detector-limited readout: an input digit is convolved with the PSF corresponding to the learned phase mask, yielding the full detector-plane intensity, after which only a subset of detector-region integrals is recorded and used for classification.}
\label{fig:e2e_diagram}
\end{figure}

\section{Imaging Process}

\subsection{Incoherent Imaging}
\label{sec:imaging_models}
We use a standard scalar incoherent imaging model for a monochromatic, shift-invariant imaging system. This captures the class of conventional
local-response phase masks and metasurfaces considered in this work, where
the optical element is modeled as a spatially varying complex transmission
function. In this model, each object point produces the same shifted
point-spread function (PSF) at the detector. For spatially incoherent
illumination, intensities from different object points add, so image formation
is modeled as a convolution between the object intensity $x(\mathbf r)\ge 0$ and an
intensity PSF $h(\mathbf r)\ge 0$, with
$\int h(\mathbf r)\,d\mathbf r \le 1$. The imaging equation is then
\begin{equation*}
y(\mathbf r) \;=\; (h * x)(\mathbf r) \;+\; \epsilon(\mathbf r),
\end{equation*}
where $y(\mathbf r)$ is the detected intensity and $\epsilon$ is the noise associated with the detection process~\cite{Goodman2017Fourier}. In the Fourier domain, $
\widehat{y}(\boldsymbol{\omega}) = \widehat{h}(\boldsymbol{\omega})\,\widehat{x}(\boldsymbol{\omega})
+ \widehat{\epsilon}(\boldsymbol{\omega})$,
where $\widehat{h}$ denotes the optical transfer function (OTF). Consequently, $\widehat{h}(0) \le 1$ and
$|\widehat{h}(\boldsymbol{\omega})| \leq 1$ for all $\boldsymbol{\omega}$
\cite{Goodman2017Fourier,BornWolf1999}.

\begin{figure}[t]
\centering
\includegraphics[width=0.9\linewidth]{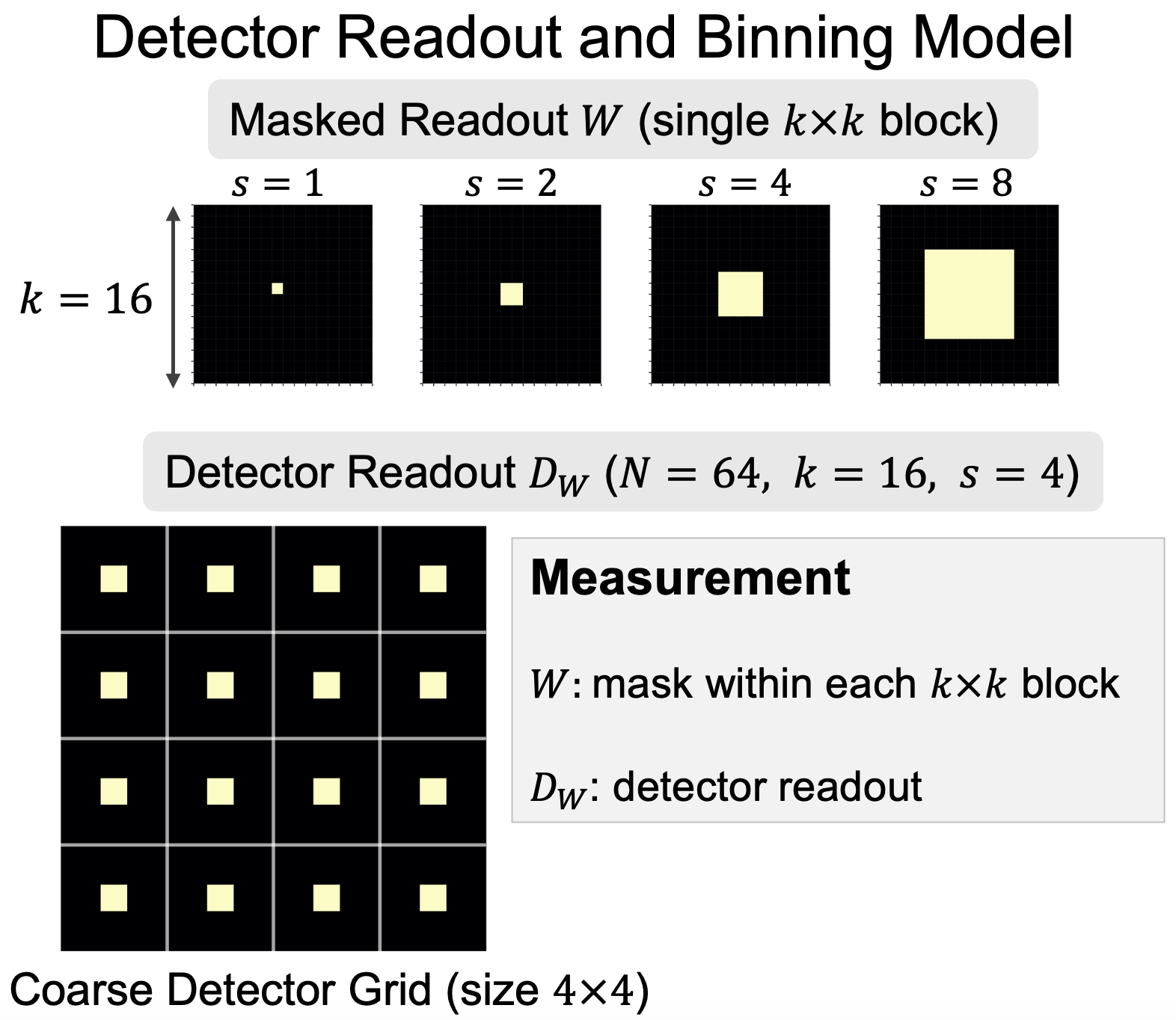}
\caption{Detector divided into $k\times k$ blocks, which may correspond to physical pixel regions. Within each block, the pale yellow square denotes a readout region of side length ($s$); intensities in this region are summed to produce one measurement per block. The top row shows the mask ($W$) on a single block for different ($s$). The bottom row shows the resulting detector readout ($D_W$) applied across the full detector.
}
\label{fig:detector}
\end{figure}

\subsection{Detector Readout and Noise Model}
\label{sec:detector_binning}

This section defines the detector model used throughout the paper, including full readout and the block-sum, sparse, and masked configurations that parameterize our detector-limited regimes. We discretize the continuous image formation model onto a sufficiently fine detector-plane grid, yielding $Y_{\mathrm{fine}}\in\mathbb{R}^{N\times N}$, with $N$ chosen so that discretization error is negligible relative to the detector-limited readout effects studied below. $Y_{\mathrm{fine}}$ is the intensity pattern formed after light passes through the phase mask and propagates to the detector plane prior to detector noise. When the detector records one measurement at each \emph{fine-grid} location, we refer to this as \textbf{full readout}.

To model larger detector pixels, we partition this fine grid into disjoint $k\times k$ blocks and aggregate the intensity within each block. Varying $k$ corresponds to varying the physical pixel size. The
resulting binned detector output lives on a \emph{coarse} grid of size
$N' \times N'$, where $N' := N/k$.

The readout is a weighted sum over each block with 
mask $W$; we write the operator as $D_W$ and give its 
explicit form in Appendix~\ref{app:detector-operator}. When $W \equiv 1$, the detector performs \textbf{block-sum readout}, aggregating all pixels within each block. A one-hot $W$ corresponds to \textbf{sparse readout}, where only a single pixel per block is measured. More generally, a centered $s \times s$ indicator (with $s \le k$) defines \textbf{masked readout}, which aggregates a local subset of pixels within each block. In all cases, the readout produces one
measurement per block, indexed on the coarse detector grid. We refer to readout configurations that do not record all entries of the fine-grid detector image---including coarse, sparse, and masked readout---as \emph{detector-limited} regimes. Fig.~\ref{fig:detector} illustrates this readout model.

Finally, noise associated with the detection process is modeled as additive
white Gaussian noise, with entries $
\epsilon[p,q] \overset{\mathrm{i.i.d.}}{\sim} \mathcal N(0,\sigma^2)$.
The final measurement is $
Y = D_W Y_{\mathrm{fine}} + \epsilon$.

\section{Mutual Information Analysis and End-to-End Optics Modeling}

\subsection{Classification Objective}
To analyze how optical design affects classification performance, we model the joint imaging–inference chain as a Markov 
process $C \to X \to Y$, where $C$ is the class label, 
$X$ the object intensity, and $Y$ the detector 
measurement. In experiments, $Y$ is passed to a neural 
network trained end-to-end with the optical front end 
under a cross-entropy objective. For theoretical 
analysis we assume a Bayes-optimal classifier on $Y$ 
and use the mutual information $I(C; Y)$ as our 
analysis proxy~\cite{Ding2022}: higher $I(C; Y)$ implies a lower 
achievable error probability via Fano's inequality 
(Supp. Section~SII). The Bayes-optimal 
analysis therefore provides a theoretical ceiling for 
the trained networks.

With this classification objective established, we ask 
whether the optical front-end can increase the mutual 
information between detector measurements and class labels 
beyond what a conventional focusing lens achieves.

\subsection{Full Readout: Optics Cannot Increase Mutual Information}
\label{sub:mi_proof}
Here, we show that under full readout, the ideal imaging channel is information-theoretically optimal among passive incoherent phase masks, and that a diffraction-limited focusing lens approaches this bound. We now consider the full-readout setting introduced in Section~\ref{sec:detector_binning}, in which the discretized detector image is recorded at all fine-grid locations. Here, \(\|\cdot\|_2\) denotes the \(\ell_2\) norm, and for an operator \(A\), \(\|A\|_2\) denotes the induced operator norm. For any phase mask, the corresponding discrete PSF satisfies \(h[n]\ge 0\) and \(\sum_n h[n]\le 1\). Convolution with \(h\) is contractive on \(\ell_2\) norm (see Appendix~\ref{app:theorem1-proof}); equivalently, the imaging operator \(A\) satisfies \(\|A\|_2 \le 1\).
This contractivity of $A$ is the key property used below.

\begin{theorem}
\label{thm:mi}
Consider the discrete full-readout imaging model
\[
Y = AX + \epsilon, \qquad \epsilon \sim \mathcal{N}(0,\sigma^2 I),
\]
where \(A\) is a contractive linear operator, i.e.\ \(\|A\|_2 \le 1\). Let $A = I$ denote the ideal imaging channel. Then
\[
I(C;Y) \le I(C;X+\epsilon).
\]
\end{theorem}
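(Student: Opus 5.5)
The plan is to show that, for any contractive $A$, the measurement $Y = AX+\epsilon$ can be obtained by passing the ideal-channel output $Z := X + \epsilon_1$ (with $\epsilon_1\sim\mathcal N(0,\sigma^2 I)$ independent of $(C,X)$) through a channel that does not depend on $X$ or $C$. Then the data processing inequality along $C \to X \to Z \to Y$ gives $I(C;Y)\le I(C;Z)=I(C;X+\epsilon)$.

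\textbf{Construction of the degradation.} Apply $A$ to $Z$ and add an independent Gaussian correction:
\[
Y' := AZ + \epsilon_2 = AX + A\epsilon_1 + \epsilon_2, \qquad \epsilon_2 \sim \mathcal N\bigl(0,\sigma^2(I - AA^{\top})\bigr),
\]
with $\epsilon_2$ independent of $(C,X,\epsilon_1)$. The key point is that $\|A\|_2\le 1$ is equivalent to $AA^{\top}\preceq I$. Hence $I-AA^{\top}$ is positive semidefinite and is a valid (possibly degenerate) covariance. The total noise $A\epsilon_1+\epsilon_2$ is Gaussian, independent of $X$, with covariance $\sigma^2 AA^{\top} + \sigma^2(I-AA^{\top}) = \sigma^2 I$. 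Therefore $(C,X,Y')$ has the same joint law as $(C,X,Y)$, and in particular $I(C;Y)=I(C;Y')$.

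\textbf{Markov chain and conclusion.} Given $Z$, the variable $Y'$ is generated using only $A$ and the fresh noise $\epsilon_2$. So $C\to X\to Z\to Y'$ is a Markov chain, and $C \to Z \to Y'$ holds in particular. Data processing then yields $I(C;Y')\le I(C;Z)$, which is the claim.

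For the paper's setting, I would add a short lemma verifying contractivity. Let $h\ge 0$ with $\sum_n h[n]\le 1$. By Young's inequality (or Minkowski's integral inequality), $\|h*x\|_2\le \|h\|_1\|x\|_2\le \|x\|_2$. Equivalently, in the Fourier domain, $|\widehat h(\omega)|\le \widehat h(0)\le 1$, so the circulant or Toeplitz operator $A$ has $\|A\|_2\le 1$.

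\textbf{Expected obstacle.} The step requiring the most care is the covariance argument: making sure the correction noise is well defined when $AA^{\top}$ has eigenvalues equal to $1$, which gives a singular covariance. This is harmless, since a degenerate Gaussian is still a valid distribution and the data processing inequality needs no densities. A secondary technicality is the case of non-square $A$ (the operator mapping into a different detector grid). Here I would still use $AA^{\top}\preceq I$ on the output space, which follows from $\|A^{\top}\|_2=\|A\|_2\le 1$, so the construction goes through unchanged.
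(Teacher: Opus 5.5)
Your proposal is correct and follows the paper's proof. The paper also uses Young's inequality to get $\|A\|_2\le 1$, writes $\epsilon \stackrel{d}{=} A\eta+\zeta$ with $\zeta\sim\mathcal{N}(0,\sigma^2(I-AA^\top))$ independent of $\eta$, and concludes by data processing along $C\to X\to X+\eta\to A(X+\eta)\to Y$.
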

In other words, a contractive linear map followed by isotropic Gaussian noise is a degraded version of the identity map followed by isotropic Gaussian noise, and therefore cannot increase mutual information. Concretely, one can decompose the noise into a component passed through $A$ and an independent remainder, yielding a Markov chain
$C \to X \to X + \eta \to A(X + \eta) \to Y$ so the result follows from the data-processing inequality. A complete proof can be found in Appendix~\ref{app:theorem1-proof}. Crucially, under full detector readout, no passive incoherent phase mask exceeds the ideal-channel mutual information bound; a conventional focusing lens approaches this bound in practice. By contrast, when the detector performs coarse or partial readout, the recorded measurements no longer preserve the full detector-plane intensity, and a conventional focusing lens need not be optimal for classification; accordingly, we focus on detector-limited regimes in which optimized phase-mask optics can improve downstream performance.

\subsection{Setup}
\label{sub:gaussian_setup}
To quantify the benefit of optical optimization under 
detector binning or partial readout, we adopt a shared-covariance Gaussian model,
which has closed-form expressions in later analysis. For analytical clarity, we present the derivations in one
spatial dimension. The extension to two-dimensional imaging is
straightforward by applying the same arguments to each
two-dimensional spatial-frequency index.

Let \(C \in \{1,\dots,K\}\) with priors \(\{\pi_c\}\), and for each class \(c\), let $
\mu_c := \mathbb{E}[X \mid C = c]$.
Assume
\[
X \mid C = c \;\overset{d}{=}\; \mu_c + Z, \qquad Z \sim \mathcal{N}(0, \Sigma),
\]
where \(\Sigma\) is the covariance matrix of \(Z\). We assume that \(\Sigma\) is induced by a wide-sense stationary process and hence is diagonalized by the discrete Fourier basis. Writing
\(\mathcal F\) for the set of discrete Fourier transform (DFT) frequency bins, let \(\widehat{S}[\ell]\in\mathbb{R}\), \(\ell \in \mathcal F\) denote the corresponding diagonal entries.
This Fourier-domain diagonalization enables a frequency-by-frequency characterization. Such stationary covariance models are
standard in detection and estimation theory for representing
correlated backgrounds, noise, or clutter \cite{Kay1998Detection,VanTrees2001Detection,Abbey2007}. They are also commonly used in image modeling; for example, in classical models of natural images exhibiting a $1/f$ decay in spatial frequencies, the covariance operator is diagonal in the Fourier domain, with eigenvalues that decrease inversely with frequency \cite{vanDerSchaaf1996,Yedidia2018Aperture}. Furthermore, prior work suggests that spatial frequency is tied to semantic scale: between-class or category-level structure is often concentrated in low- and mid-spatial frequencies, whereas within-class or subordinate-level variation can rely more strongly on higher spatial frequencies~\cite{oliva2001spatialenvelope,torralba2003naturalcategories,collin2005subordinate}. This motivates our modeling focus on the relative frequency distributions of class-mean differences and within-class variability, which we instantiate in the synthetic setting and empirically examine on benchmark datasets in Figs.~\ref{fig:radial_mtf}(b) and~\ref{fig:additional_datasets}(b)--(d).

For a discrete signal \(u\), let \(\widehat u[\ell]\) denote its DFT
coefficient at frequency bin \(\ell \in \mathcal F\). Let \(A\) denote an incoherent linear shift-invariant imaging
system acting on intensity, with discrete transfer function
\(\widehat{H}[\ell]\) satisfying $|\widehat{H}[\ell]| \le 1$ and $\widehat{H}[0] = 1$. Let $D_W$ denote the detector readout operator introduced in Section~\ref{sec:detector_binning}. The measurement 
model is then $Y = D_W(h \ast X) + \epsilon$, where \(\epsilon \sim \mathcal N(0,\sigma^2 I)\)
models detector noise on the coarse detector grid.

\section{Binary Classification}

\subsection{Separability}

Using the model of
Section~\ref{sub:gaussian_setup}, we consider binary classification (\(K=2\)) with equiprobable classes and shared covariance. After imaging and detector readout, the two class-conditional measurement distributions remain Gaussian with shared covariance, $Y \mid C = c \sim \mathcal{N}(\mu_{Y,c}, 
\Sigma_Y)$ for $c \in \{0, 1\}$. The natural separability measure is the squared 
Mahalanobis distance between the class means,
\begin{equation}
d^2 := \Delta \mu_Y^\top \Sigma_Y^{-1}\Delta \mu_Y.
\label{eq:d2_def}
\end{equation}
where $\Delta \mu_Y:=\mu_{Y,1} - \mu_{Y,0}$. Intuitively, $d^2$  generalizes the 1D notion of how many standard deviations apart the class means are to the multivariate setting, accounting for within-class covariance. This is useful because in the equal-covariance Gaussian setting, both the mutual information \(I(C;Y)\) and the Bayes classification error depend on the observation model only through \(d^2\), making it a sufficient statistic for classification performance. Under this Gaussian model, \(I(C;Y)\) is monotonically increasing in \(d^2\) (Supp. Section~SII, Thm. 1). The Bayes-optimal classifier in this setting is linear, as in linear discriminant analysis (LDA). LDA and related Gaussian discriminant models\footnote{When the equal-covariance assumption is relaxed, more general separability measures such as the Bhattacharyya or Chernoff distances may be used instead~\cite{Goudail2004,Nielsen2014}; LDA-style assumptions are also known to be robust under mild mismatch~\cite{lachenbruch1975,klecka1980}.} are classical tools in discrimination tasks, including face recognition and other pattern-classification problems~\cite{bishop2006pattern,etemad1997discriminant,gan2014selftraining}.

Under the stationarity assumption, the Fourier basis diagonalizes the covariance, enabling a frequency-by-frequency decomposition of $d^2$ as a ratio of between-class signal power to within-class variability---a spectral viewpoint that becomes particularly informative once detector constraints introduce aliasing. When frequencies remain decoupled, $d^2 = \sum_{\ell\in\mathcal F} \widehat{d}^2[\ell]$,
where each term \(\widehat{d}^2[\ell]\) quantifies the contribution of frequency bin \(\ell\) to overall separability. Detector-limited readout breaks this decoupling: aliasing folds multiple fine-grid frequencies into each coarse-grid bin, modifying the additive decomposition. Appendix~\ref{app:exact_alias} derives the corresponding expressions.

\subsection{Synthetic Binary Classification Experiment}
\label{sec:toy_binary}

We perform a binary classification experiment using a 
physical imaging model (bandlimited angular-spectrum propagation with a finite pupil-aperture mask applied at the lens plane \cite{Matsushima2009BLAS}; 
see Appendix~\ref{app:toy_binary_details}) with equal class priors 
and within-class covariances. The imaging region spans
\(1\,\mathrm{mm}\times1\,\mathrm{mm}\), with \(1\times\) magnification between the
object and detector planes. To isolate the effects of the optics and detector sampling on class separability, we use a synthetic Gaussian model. Specifically, an object image
$X\in\mathbb{R}^{N\times N}$ is drawn from one of two equally likely classes sharing a common covariance, $X\mid C=c\sim\mathcal{N}(\mu_c,\Sigma)$,
where $\mu_0$ and $\mu_1$ differ through a localized two-lobed pattern
formed by subtracting one spatial Gaussian from another at a different nearby location and adding the result to a nonnegative background (see Fig.~\ref{fig:binary_sweeps}a)), and $\Sigma$ is the within-class covariance of a standard wide-sense stationary process whose power spectral density is a radially symmetric Gaussian. This construction instantiates the regime discussed in Section~\ref{sub:gaussian_setup}, with low-frequency class-mean differences and relatively higher-frequency within-class variability. The deliberately separated spectra isolate the detector-limited mechanism and make the optical filtering and aliasing effects easier to visualize.

Measurements are generated via an incoherent, shift-invariant imaging model, with detector-limited readout. We then jointly optimize the phase mask and a logistic regression classifier, which is Bayes-optimal under the equal-covariance Gaussian model. Finally, we evaluate both the squared Mahalanobis distance and the test accuracy of the end-to-end optical system. Appendix~\ref{app:toy_binary_details} gives the implementation details needed to reproduce the experiment, including the physical propagation model, phase-mask parameterization, detector readout configurations, within-class covariance model, and training hyperparameters.

We study two detector-limited regimes. The fine-grid resolution is $N=256$. First, in a block-sum readout
sweep, we vary the detector resolution
$N'\in\{2,4,8,16,32\}$, where $N'=N/k$. Second, in a
masked-readout sweep, we fix $N'=16$ and vary the readout budget $s$ by summing a centered $s\times s$ region within
each $k\times k$ block, for $s\in\{1,2,4,8,16\}$. In both regimes, we compare a fixed conventional lens against joint
optimization of the optical phase and a linear classifier, and report
both $d^2$ (computed using the detector-limited frequency-domain expression derived in Section~\ref{sec:detector_configs}) and validation accuracy. We first perform sweeps in the absence of detector noise.

\begin{figure}[t]
    \centering
    \subfloat[]{%
        \includegraphics[width=0.8\linewidth]{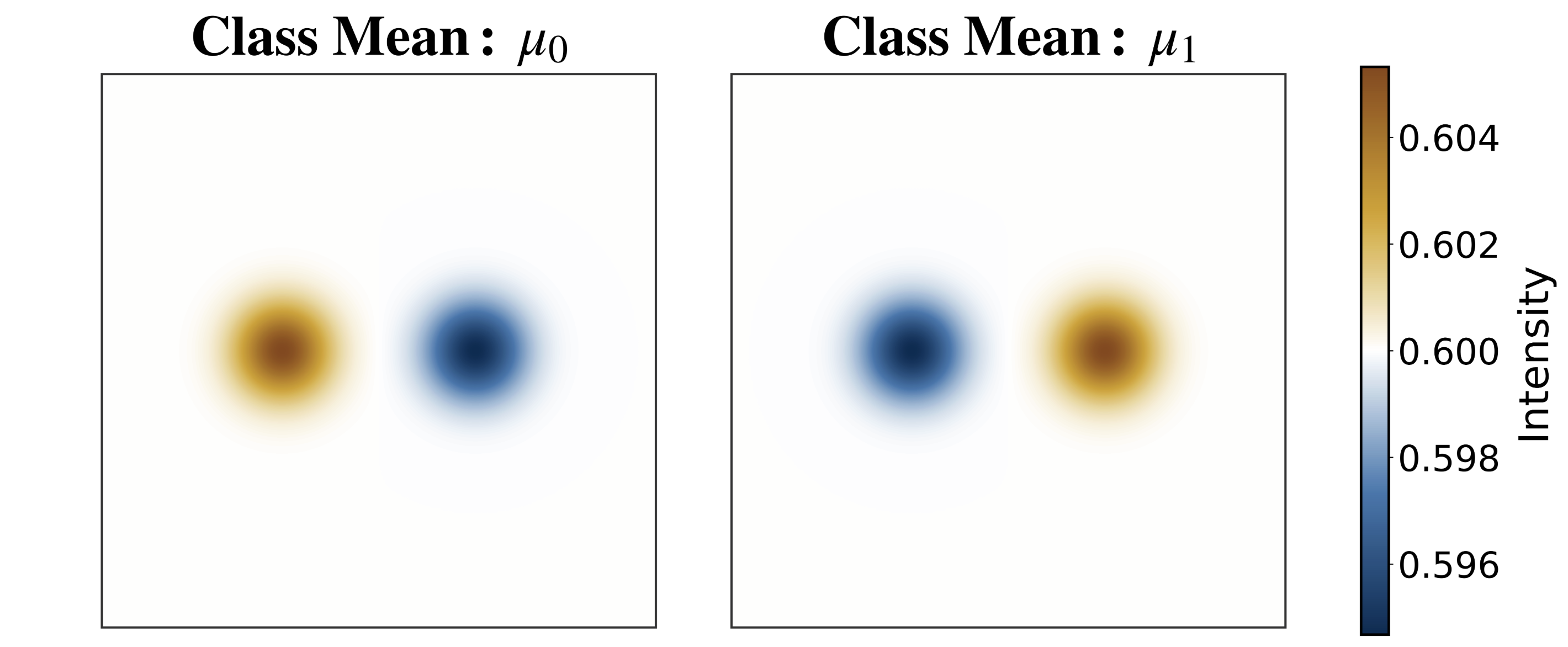}
        \label{fig:sub0}
    }
    \hfil
    \subfloat[]{%
        \includegraphics[width=0.9\linewidth]{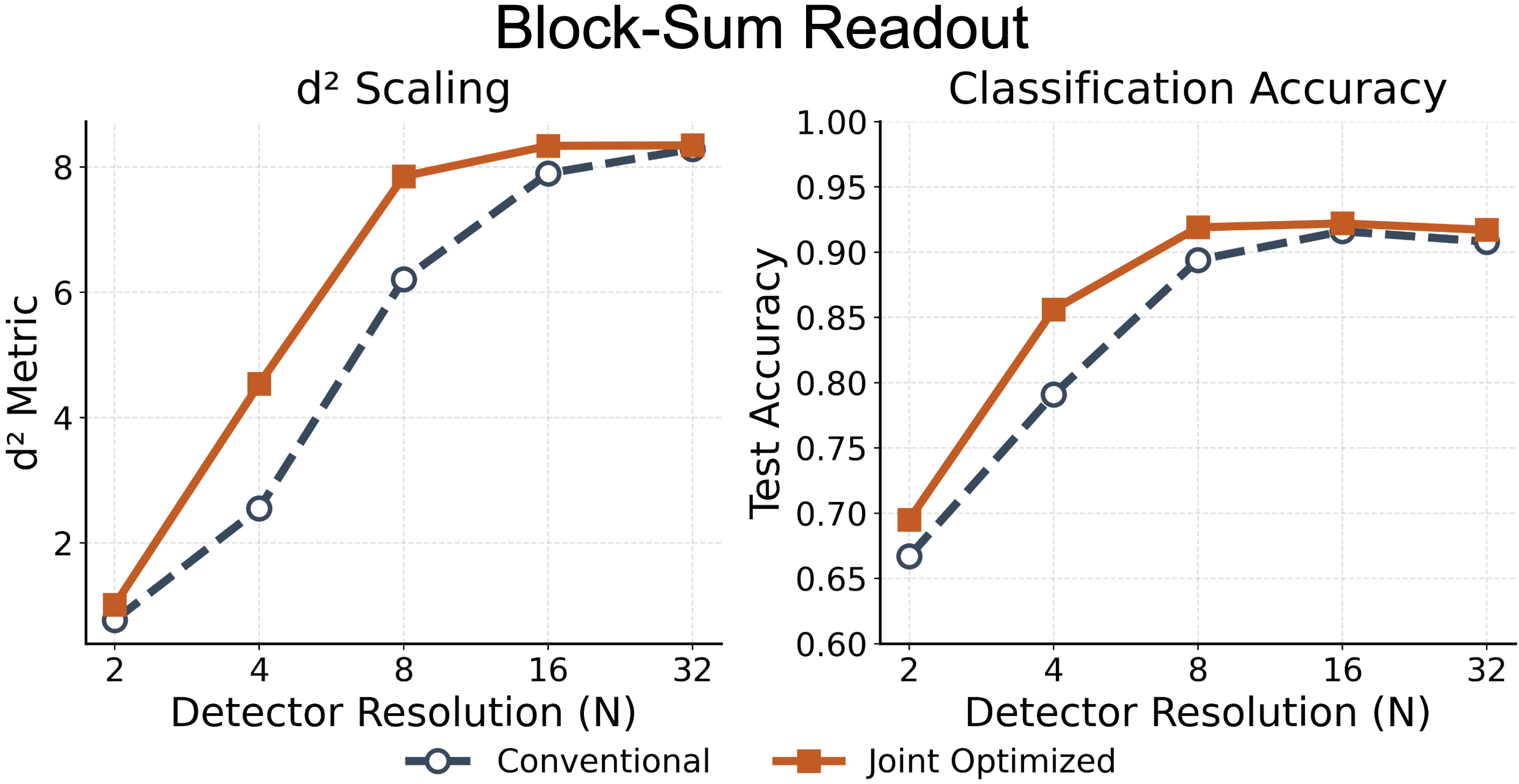}
        \label{fig:sub1}
    }
    \hfil
    \subfloat[]{%
        \includegraphics[width=0.9\linewidth]{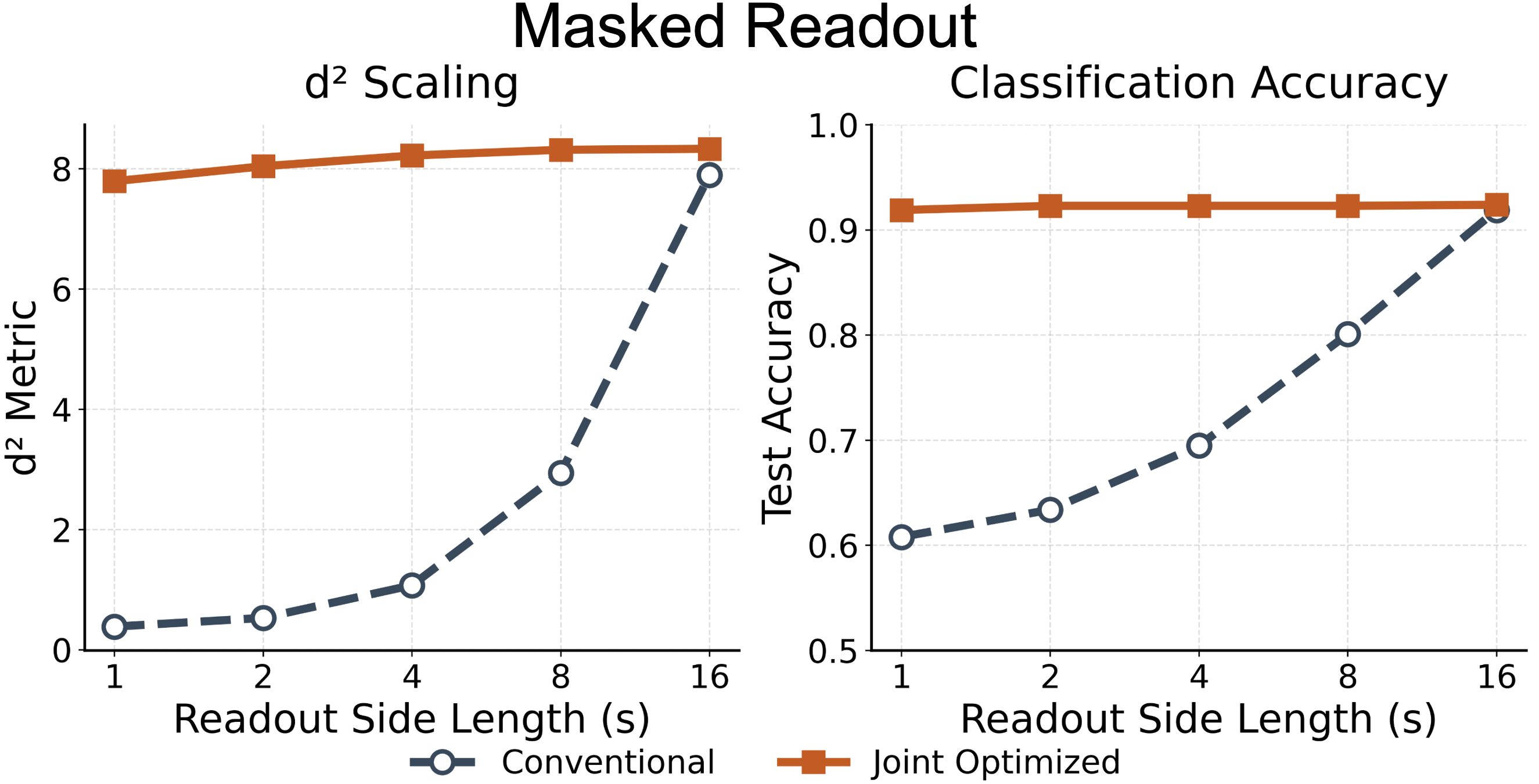}
        \label{fig:sub2}
    }
    \hfil
    \subfloat[]{%
        \includegraphics[width=0.9\linewidth]{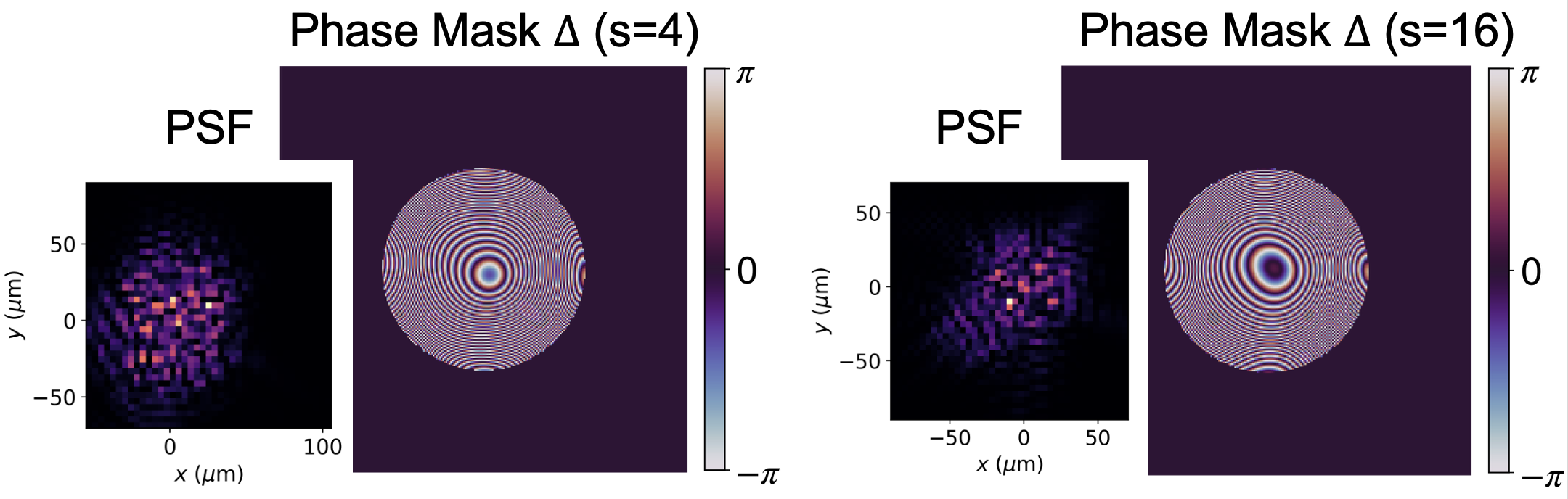}
        \label{fig:sub2}
    }
    \caption{\textbf{Binary classification under detector-limited readout (noiseless).}
(a) Class means \(\mu_0\) and \(\mu_1\) in the synthetic Gaussian experiment.
(b) \emph{Block-sum readout sweep:} detector resolution \(N'\) is varied, with each detector pixel summing a disjoint \(k\times k\) block. Left: separability proxy \(d^2\). Right: test accuracy.
(c) \emph{Masked-readout sweep:} detector resolution is fixed and the within-block readout size \(s\times s\) is varied. Left: separability proxy \(d^2\). Right: test accuracy. In both sweeps, jointly optimizing the optical phase and linear classifier improves performance relative to a conventional lens under detector constraints. (d) Phase mask difference $\Delta$ (optimized minus lens phase profile) and resulting PSFs for $s=4, 16$ masked readout. The phase difference is plotted to clearly visualize the learned deviations from the conventional lens profile.}
    \label{fig:binary_sweeps}
\end{figure}

In the block-sum readout sweep (Fig.~\ref{fig:binary_sweeps}b), the absolute performance of both the conventional lens and the jointly optimized phase-mask system degrades substantially as detector resolution decreases, indicating that coarse spatial aggregation removes task-relevant information for either optical design. The optimized system nevertheless maintains consistently better absolute performance than the conventional baseline, with the largest performance gap appearing at intermediate detector resolutions. This suggests that the learned optical front-end redistributes class-discriminative structure into measurements that are comparatively more robust to coarse readout, even though it cannot prevent the strong absolute degradation caused by severe spatial coarse graining. At very low detector resolution, both systems become capped by the loss induced by aggressive block summation, so the achievable benefit of optical optimization is limited. At high detector resolution, the gap also narrows as the system approaches the near full-readout regime, where additional optical optimization yields diminishing returns, consistent with Theorem~\ref{thm:mi}.

In the masked-readout regime (Fig.~\ref{fig:binary_sweeps}c), the absolute performance of the conventional lens baseline improves substantially as the readout side length \(s\) increases, since larger readout regions recover more useful spatial information. The jointly optimized system maintains high separability and accuracy even at small $s$: the learned phase mask effectively redirects the class-discriminative parts of the image onto the pixels the detector actually reads, rather than relying on the detector to capture them by chance. In all, the performance gap is largest at small $s$ and decreases as $s$ increases---once $s$ is large enough that the detector reads most of the image anyway, this redirection matters less. These trends extend to multiclass classification: a four-class synthetic 
experiment using the minimum pairwise Mahalanobis distance as the 
separability proxy exhibits the same qualitative behavior across both 
detector-limited regimes; see Supp. Section~SI.

\begin{figure}[t]
    \centering
    \includegraphics[width=\linewidth]{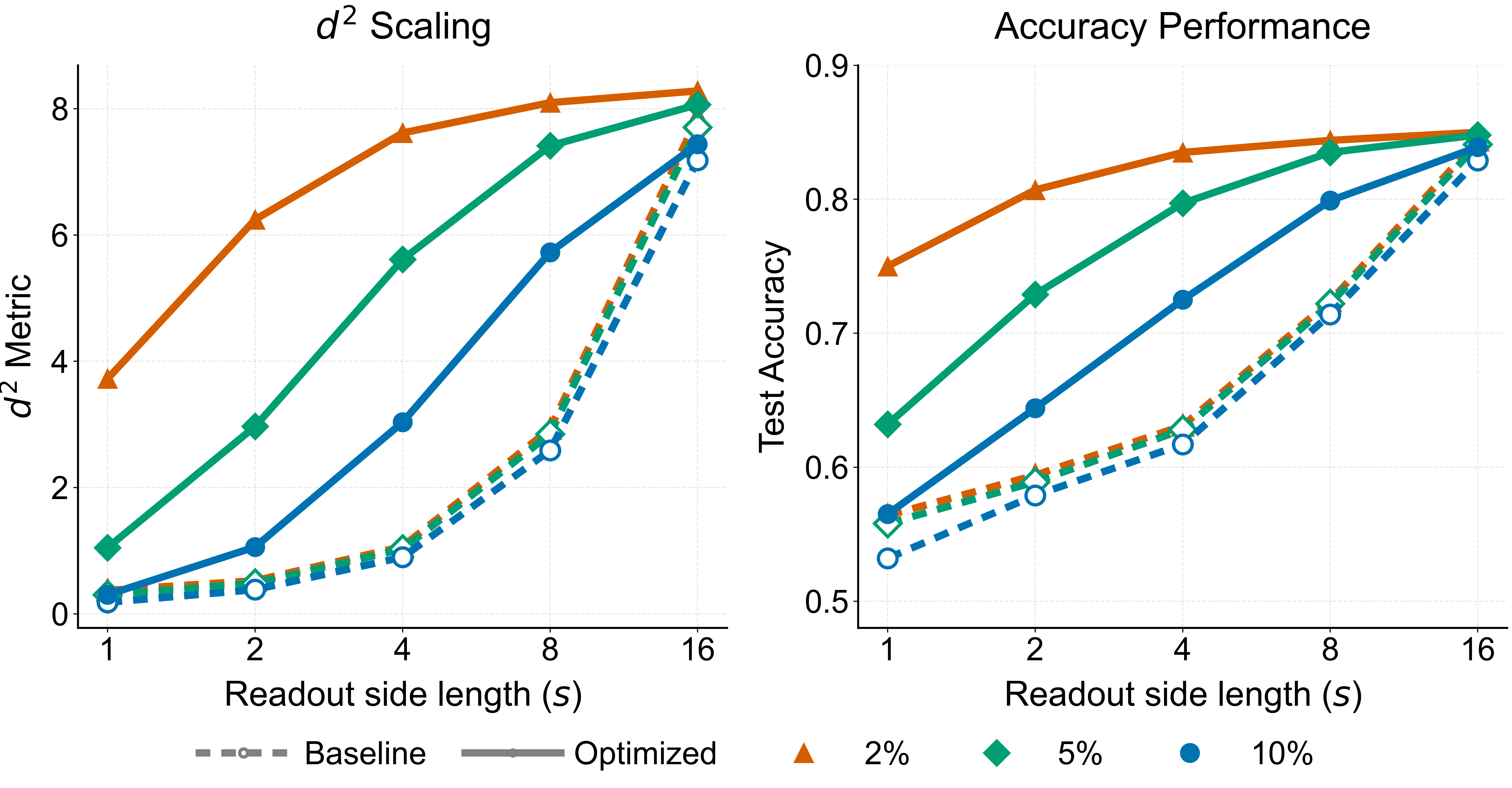}
    \caption{Effect of detector noise on binary masked-readout classification. (a) Separability metric \(d^2\) versus within-block readout side length \(s\)
for conventional and jointly optimized systems at three detector-noise levels.
(b) Corresponding test accuracy versus \(s\).
Increasing detector noise reduces the advantage of joint optimization, especially
in the sparse-readout regime (small \(s\)).}
    \label{fig:binary_noise}
\end{figure}

\subsection{Effect of Detector Noise}
\label{sub:binary_noise}

We next study how detector noise affects the absolute performance of the conventional lens baseline, the absolute performance of the jointly optimized system, and the gap between them under masked detector readout. To compare different readout sizes fairly, we fix the detector-noise variance per unit sensor area and evaluate all designs at \(N' = 16\), so that the noise variance of each coarse measurement scales with the area of the readout region. This normalization is broadly consistent with photon-counting noise models, in which shot-noise variance scales with the expected collected signal~\cite{Janesick2001ScientificCCD}. We run end-to-end optimization at three detector-noise levels and compare against a conventional focusing lens baseline. For each noise level, we scaled the noise at each fine-grid pixel such that the noise standard deviation was \{2\%, 5\%, 10\%\} of the mean object intensity per pixel.

Fig.~\ref{fig:binary_noise}a--b shows that, as detector noise increases, the absolute performance of both the conventional and jointly optimized systems decreases. The performance gap between them also shrinks, especially at small \(s\), where the readout is most constrained and the measurements are most sensitive to detector noise. At lower noise levels, the gap is largest at intermediate or small \(s\), where readout remains constrained enough for optical co-design to help, but the measurement SNR is not yet too severely degraded. Essentially, the optics can reshape which spatial frequencies reach the detector, but cannot suppress noise added after detection; as the detector noise floor grows, this added noise increasingly limits performance regardless of how the optics shape the signal, leaving less headroom for optical co-design to exploit.

\section{Analysis: Detector-Limited Readout Configurations}
\label{sec:detector_configs}
We now analyze the preceding observations using the mathematical structure of the $d^2$ separability expression. 

Consider a signal of length $N$, with downsampling factor $k$
such that $k \mid N$. Let $m \in \{0,1,\dots,N/k-1\}$ index the DFT bins of the downsampled signal. Under downsampling by a factor $k$, each coarse-grid bin $m$ receives contributions from the $k$ fine-grid DFT bins whose indices are congruent to $m$ modulo $N/k$. We denote these by
\[
\ell_r := m + r\frac{N}{k}, \qquad r=0,1,\dots,k-1.
\]
Let $\widehat{H}[\ell]$ denote the optical transfer function (OTF), i.e., the Fourier transform of the PSF $h$, sampled at the fine-grid DFT frequencies. For the detector readout operator $D_W$ (Section~\ref{sec:detector_binning}), let $\widehat{q_r}[m]$ denote the frequency-domain weight on the $r$th aliased branch $\ell_r$ in coarse-grid bin $m$. Explicit expressions for $\widehat{q_r}[m]$ are given below for the block-sum and sparse readout cases.

The contribution to the squared Mahalanobis distance at coarse-grid bin $m$ is
\begin{equation}
\widehat{d}^2[m]
=
\frac{\left|\sum_{r=0}^{k-1} \widehat{q_r}[m]\,\widehat{H}[\ell_r]\,\Delta\widehat{\mu}[\ell_r]\right|^2}
{\sum_{r=0}^{k-1}|\widehat{q_r}[m]|^2|\widehat{H}[\ell_r]|^2\widehat{S}[\ell_r]+k^2\sigma^2}
\label{eq:sectionV_exact_d2_dft}
\end{equation}
where $\Delta\widehat{\mu}[\ell]:=\widehat{\mu_1}[\ell]-\widehat{\mu_0}[\ell]$. The derivation, together with 
explicit expressions for $\widehat{q}_r[m]$ under 
block-sum, sparse, and masked readout, is given in Appendix~\ref{app:exact_alias_channel}. Throughout this section, we interpret 
Eq.~\eqref{eq:sectionV_exact_d2_dft} in the regime where 
the class-mean difference $\Delta\widehat{\mu}[\ell]$ is 
concentrated at low spatial frequencies while the 
within-class spectrum $\widehat{S}[\ell]$ has substantial 
energy at higher frequencies. This matches the synthetic 
setting of Section~\ref{sec:toy_binary}. We revisit this assumption 
in Section~\ref{sec:mtf_connection}, where the 
frequency-shifted controls of Fig.~\ref{fig:mid-freq} move the class-mean difference into mid- and 
higher-frequency bands while keeping the within-class covariance fixed.

For masked readout with window side length $s$, the 
detector response $\widehat{q}_{r}[m]$ is the DFT of 
a contiguous $s$-pixel window, which concentrates near 
zero frequency as $s$ grows (Appendix~\ref{app:exact_endpoints}). 
Increasing $s$ therefore suppresses higher-frequency 
aliased branches in both the numerator and denominator of Eq.~\eqref{eq:sectionV_exact_d2_dft}, making the 
coarse measurement progressively more low-pass before 
downsampling. The endpoint $s = k$ recovers block-sum 
readout. 


\begin{figure}[t]
\centering
\subfloat[]{\includegraphics[width=3.2in]{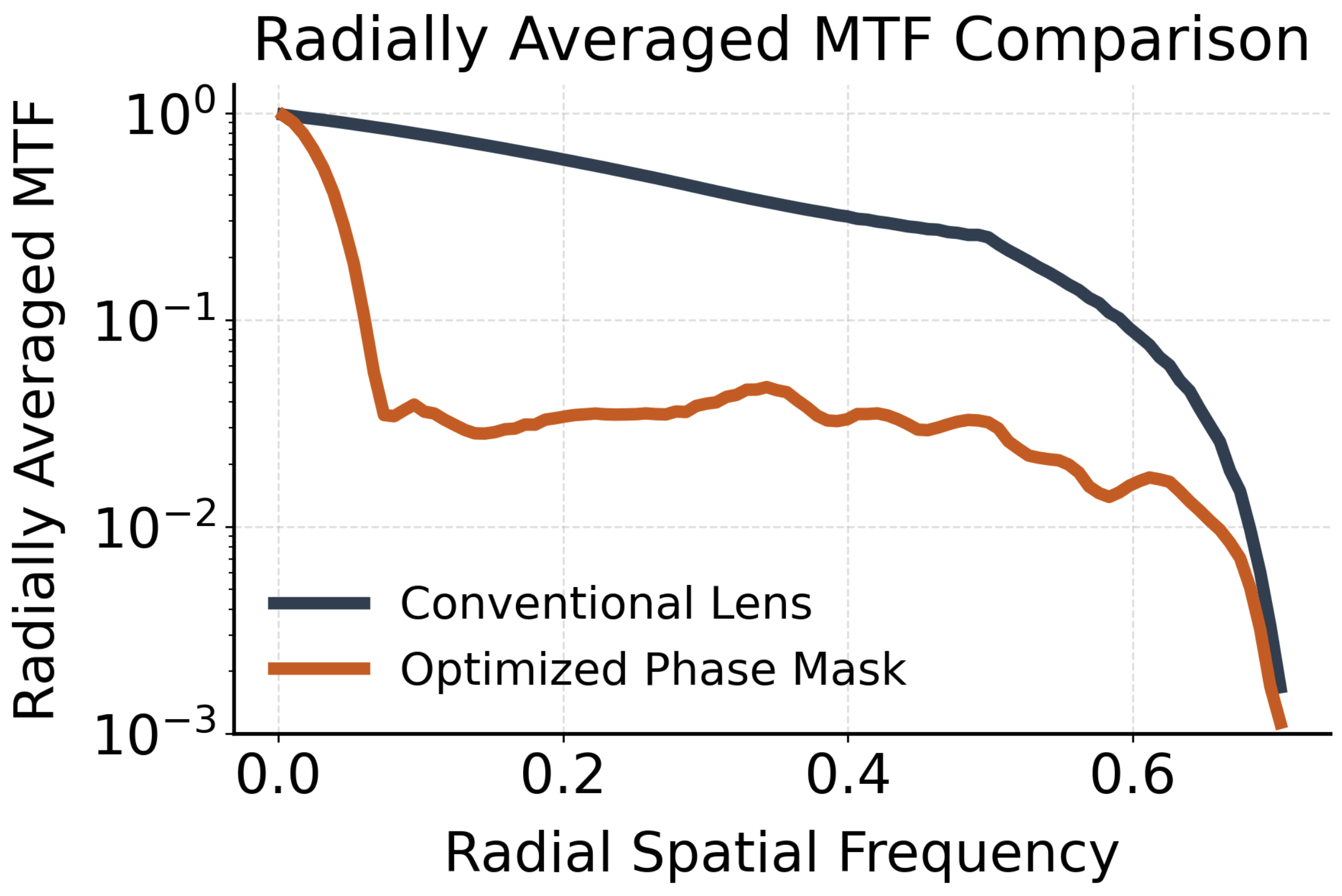}%
\label{fig_first_case}}
\hfil
\subfloat[]{\includegraphics[width=3.2in]{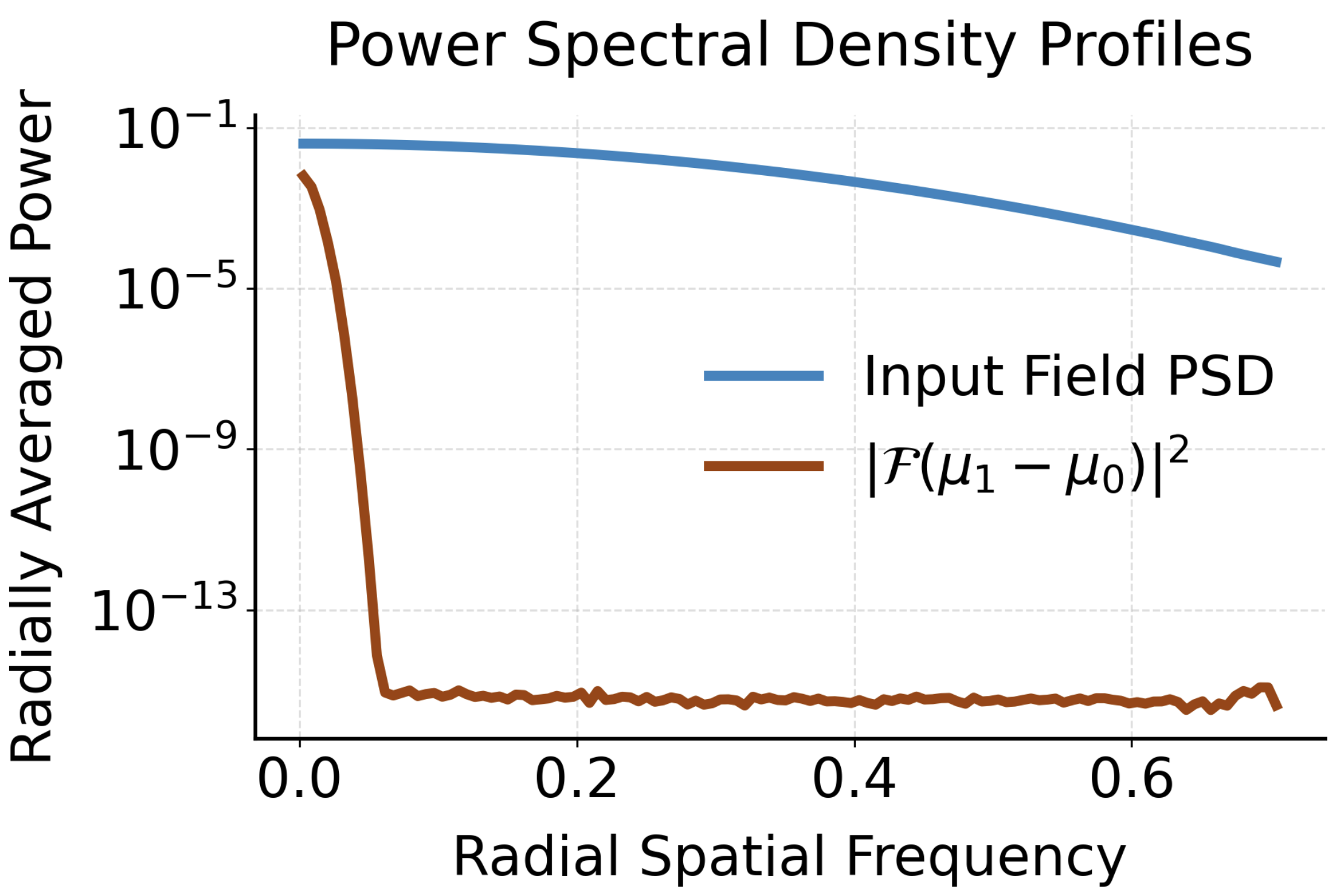}%
\label{fig:rrr}}
\caption{(a) Radially averaged MTF and power-spectrum comparison for a conventional lens and an optimized phase mask. In this setting, the conventional lens' MTF is higher across most spatial frequencies, while the optimized phase mask's MTF suppresses higher frequencies. (b) For the object classes, here are the radial PSD profiles which show the input field and the spectral concentration of $|\mathrm{FFT}(\mu_1-\mu_0)|^2$. The MTF from the optimized phase mask suppresses the higher within-class variation frequencies (while preserving the low class-mean difference frequencies) which ultimately leads to better SNR at the detector.}
\label{fig:radial_mtf}
\end{figure}

For the masked readout sweep, we set the coarse-measurement noise variance to $
\sigma^2(s)=s^2\sigma_{\mathrm{fine}}^2$,
so that it is proportional to the detector area, matching the assumption used in our experiments. The overall 
$s^2$ factor cancels between signal, covariance, and 
detector-noise terms in Eq.~\eqref{eq:sectionV_exact_d2_dft} (Eq.~\ref{eq:d2_s_normalized}). The benefit of 
larger $s$ therefore comes from the shape of the 
detector transfer function---stronger detector-side 
averaging and stronger alias suppression---not from 
collecting more photons per measurement.

Eq.~\eqref{eq:sectionV_exact_d2_dft} clarifies the 
mechanism behind the trends in Figs.~\ref{fig:binary_sweeps}(c) and ~\ref{fig:binary_noise}. In the noiseless case, the denominator contains only aliased within-class power, so $\widehat H$ has maximum 
leverage where detector-side averaging is weakest: at 
small $s$, choosing $\widehat H$ to preserve low-frequency 
class-separating structure while suppressing higher-frequency 
nuisance branches substantially reduces the denominator 
with little loss in the numerator. This is why the gain 
from optimization is largest at small $s$ in Fig.~\ref{fig:binary_sweeps}(c). Once detector noise is 
present, $\widehat H$ reshapes the signal and covariance 
terms but cannot suppress the additive noise floor 
$k^2\sigma_{\mathrm{fine}}^2$. At small $s$ this 
floor dominates and limits how much optics can help; at large $s$ detector-side averaging has already 
suppressed most nuisance variation, leaving little 
headroom. The largest gap between optimized and baseline 
therefore shifts toward intermediate $s$, where neither limit 
dominates, as seen in Fig.~\ref{fig:binary_noise}.

\subsection{Connection to the Modulation Transfer Function}
\label{sec:mtf_connection}
The modulation transfer function (MTF), defined as the magnitude of the OTF $|\hat{H}|$, characterizes how strongly the imaging system transmits each spatial frequency component. The MTF is normalized to 1 at DC and  typically has a low-pass character~\cite{hopkins1955frequency}. As shown in Fig.~\ref{fig:radial_mtf}a, the optimized phase mask preserves low-frequency content relative to a conventional focusing lens while more strongly attenuating high frequencies. In the detector-limited regime, these high-frequency components contribute more to within-class variability and aliasing than to between-class separation---their suppression therefore reduces nuisance variation while retaining the low-frequency structure that carries the bulk of class-mean differences. Whether high-frequency suppression is beneficial depends on the spectral distribution of the discriminative signal; in Fig.~\ref{fig:radial_mtf}(b), that signal is concentrated at low frequencies, making the phase mask's attenuation particularly well-matched to the task. 

The frequency-shifted synthetic controls in Fig.~\ref{fig:mid-freq} further illustrate this point; implementation details are provided in Appendix~\ref{app:freq_shifted_binary}. In these controls, the class-mean difference is shifted toward mid and high spatial frequencies while the within-class covariance, optical model, detector readout, and training procedure are kept fixed. When the class signal is shifted to mid spatial frequencies, optical co-design still improves masked-readout performance, with the largest gain occurring at intermediate readout sizes. When the discriminative signal is shifted to higher spatial frequencies, the advantage of joint optimization decreases as the readout window grows. The drop in both \(d^2\) and test accuracy at larger readout sizes in Fig.~\ref{fig:mid-freq}(b)--(c) occurs because increasing \(s\) makes the masked readout increasingly similar to block-sum averaging. In the mid- and high-frequency controls, the class-mean difference has been shifted away from the lowest spatial frequencies, so this detector-side averaging suppresses not only aliased nuisance variation but also the spatial frequencies that carry the discriminative signal. This is consistent with the MTF interpretation: low-pass optical or detector-side filtering is beneficial when high frequencies mainly carry nuisance variation, but it becomes less favorable when those frequencies contain task-relevant class information.
\begin{figure}[h!]
\centering
\subfloat[]{\includegraphics[width=3.2in]{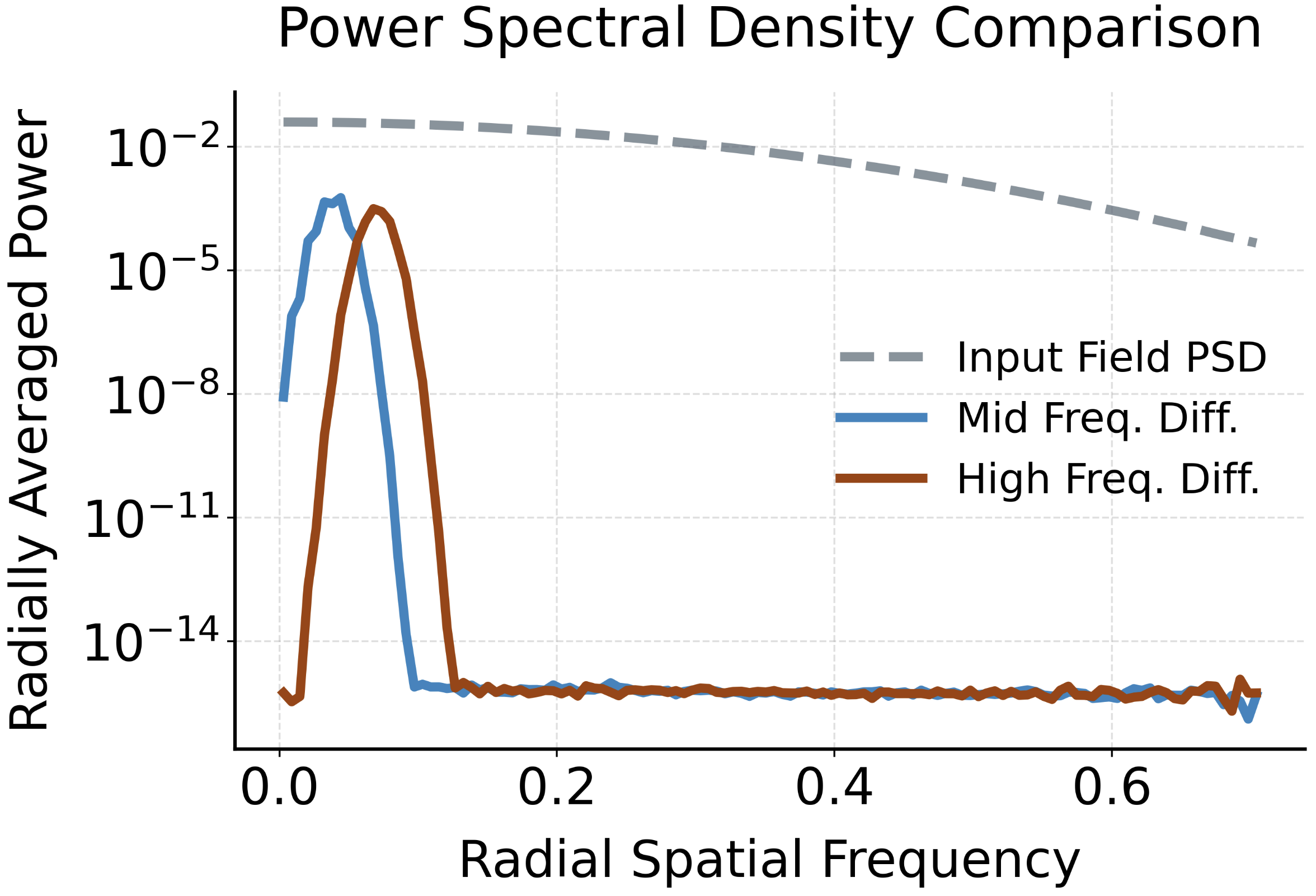}%
\label{fig:psd-mid-high}}
\hfil
\subfloat[]{\includegraphics[width=3.2in]{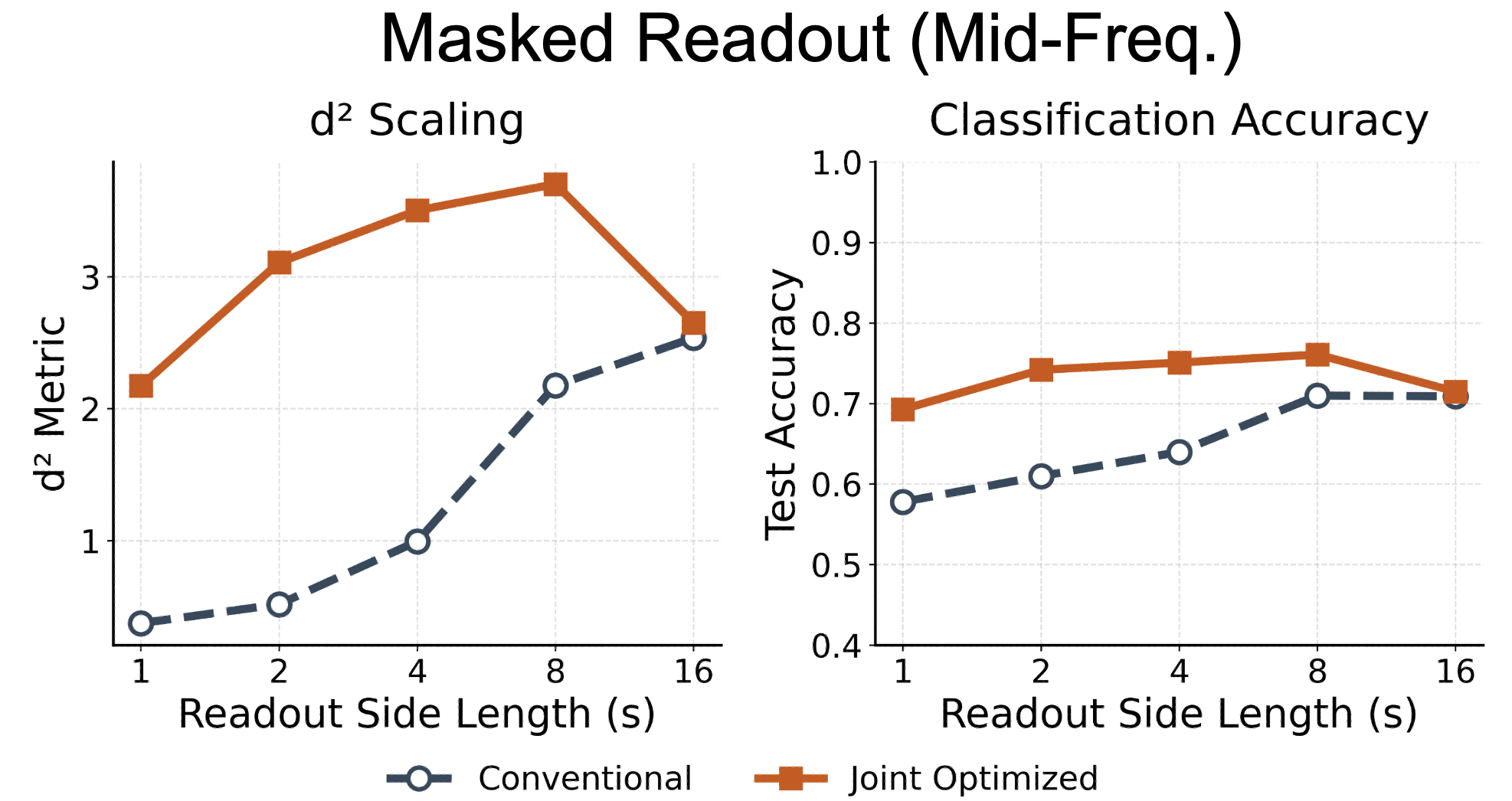}%
\label{fig:mid-accuracy}}
\hfil
\subfloat[]{\includegraphics[width=3.2in]{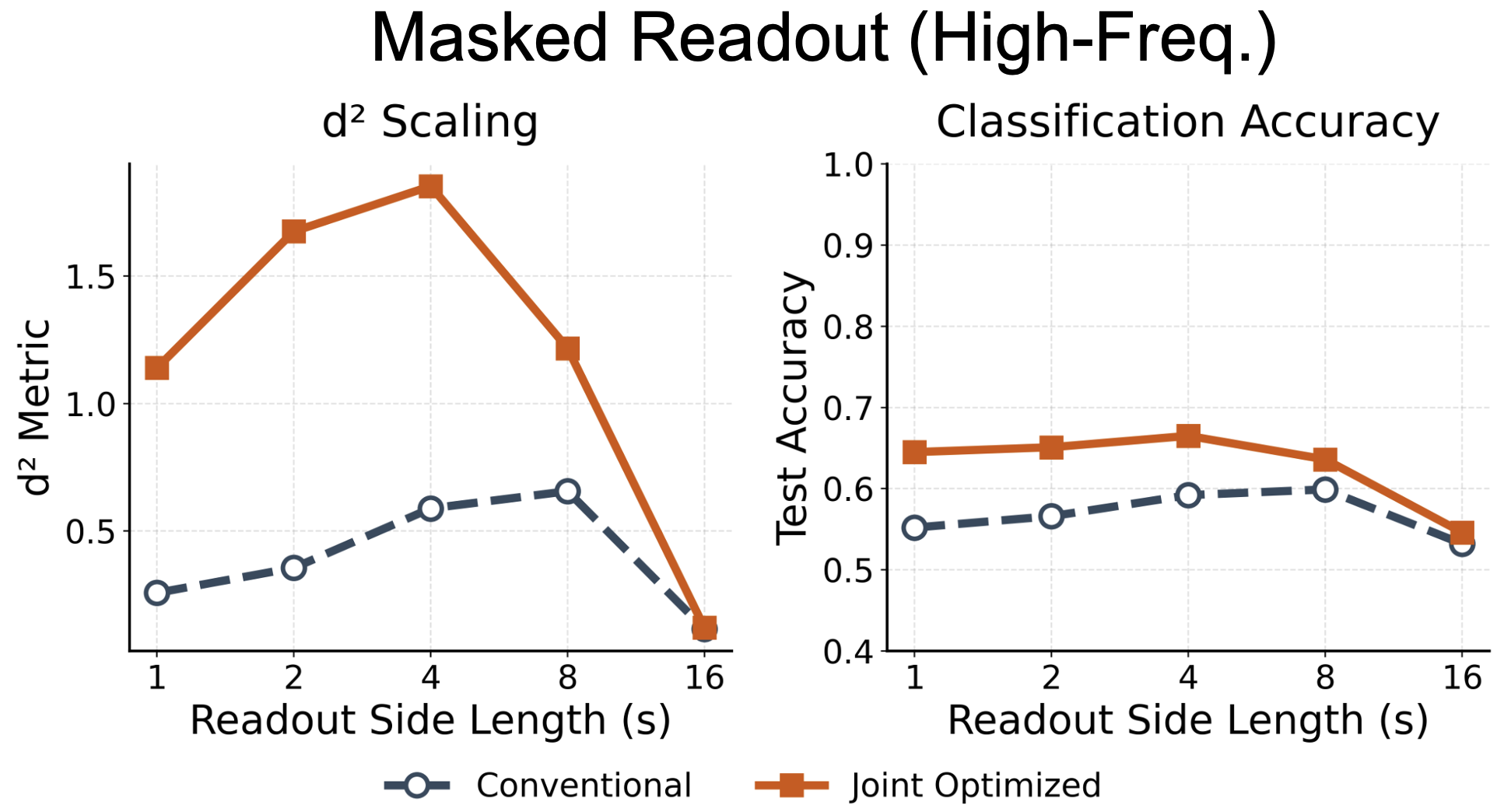}%
\label{fig:high-accuracy}}
\caption{
Frequency-shifted binary synthetic controls under the same masked-readout setting as Fig.~3, with $N=256$, $k=16$, and $2\%$ detector noise. 
(a) Radially averaged power spectra for the input WSS covariance and the class-mean differences in the mid- and high-frequency conditions. The discriminative spectrum is shifted away from the low-frequency regime used in Fig.~3 while the within-class covariance model is kept fixed.
(b) Masked-readout sweep for the mid-frequency condition. Joint optimization improves both the separability proxy $d^2$ and classification accuracy over the conventional lens, with the largest gains occurring at intermediate readout sizes.
(c) Masked-readout sweep for the high-frequency condition. The optimized system improves performance at small and intermediate readout sizes, but both systems degrade as the readout window becomes large and approaches block-sum averaging, because detector-side averaging suppresses the high-frequency discriminative signal itself.
}
\label{fig:mid-freq}
\end{figure}

\section{Additional Experiments}

\begin{figure*}[t]
\centering
\subfloat[]{\includegraphics[width=\textwidth]{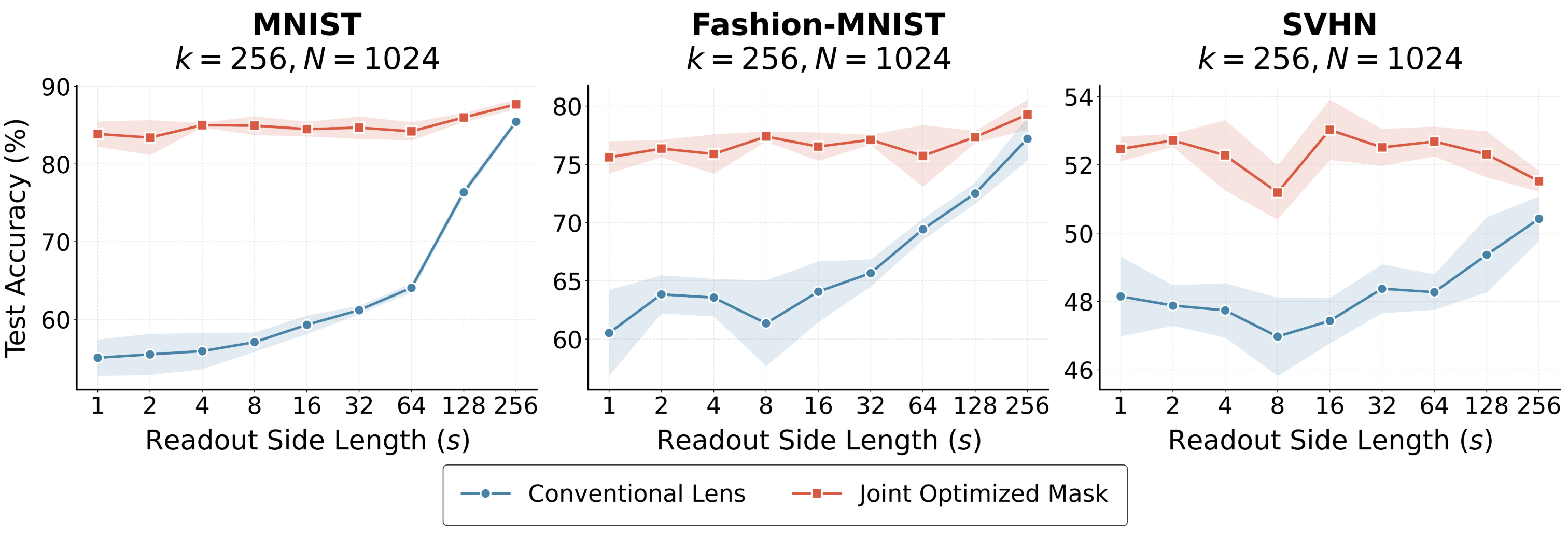}%
\label{fig_first_case}}
\hfil
\subfloat[]{\includegraphics[width=2.3in]{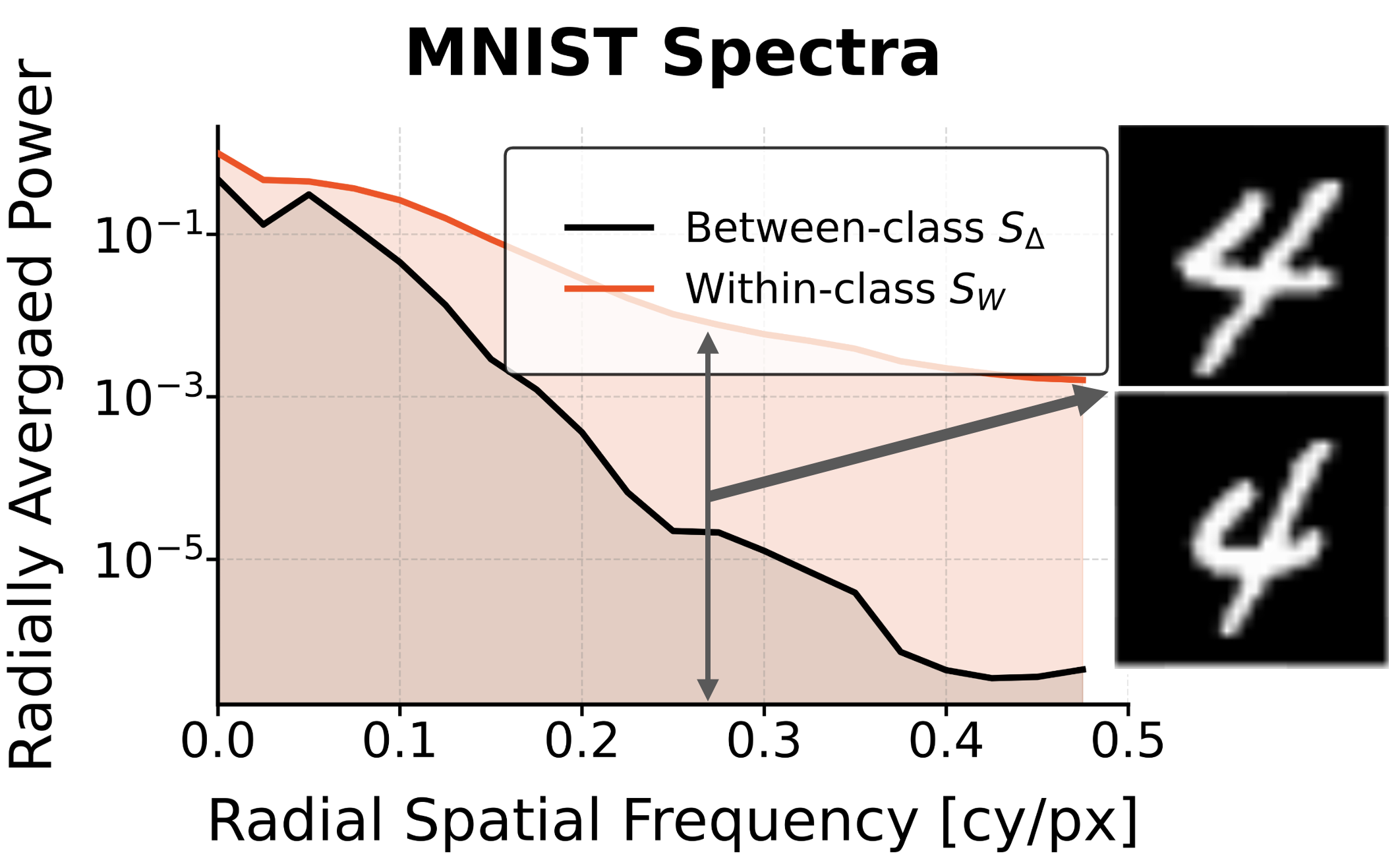}}%
\hfil
\subfloat[]{\includegraphics[width=2.1in]{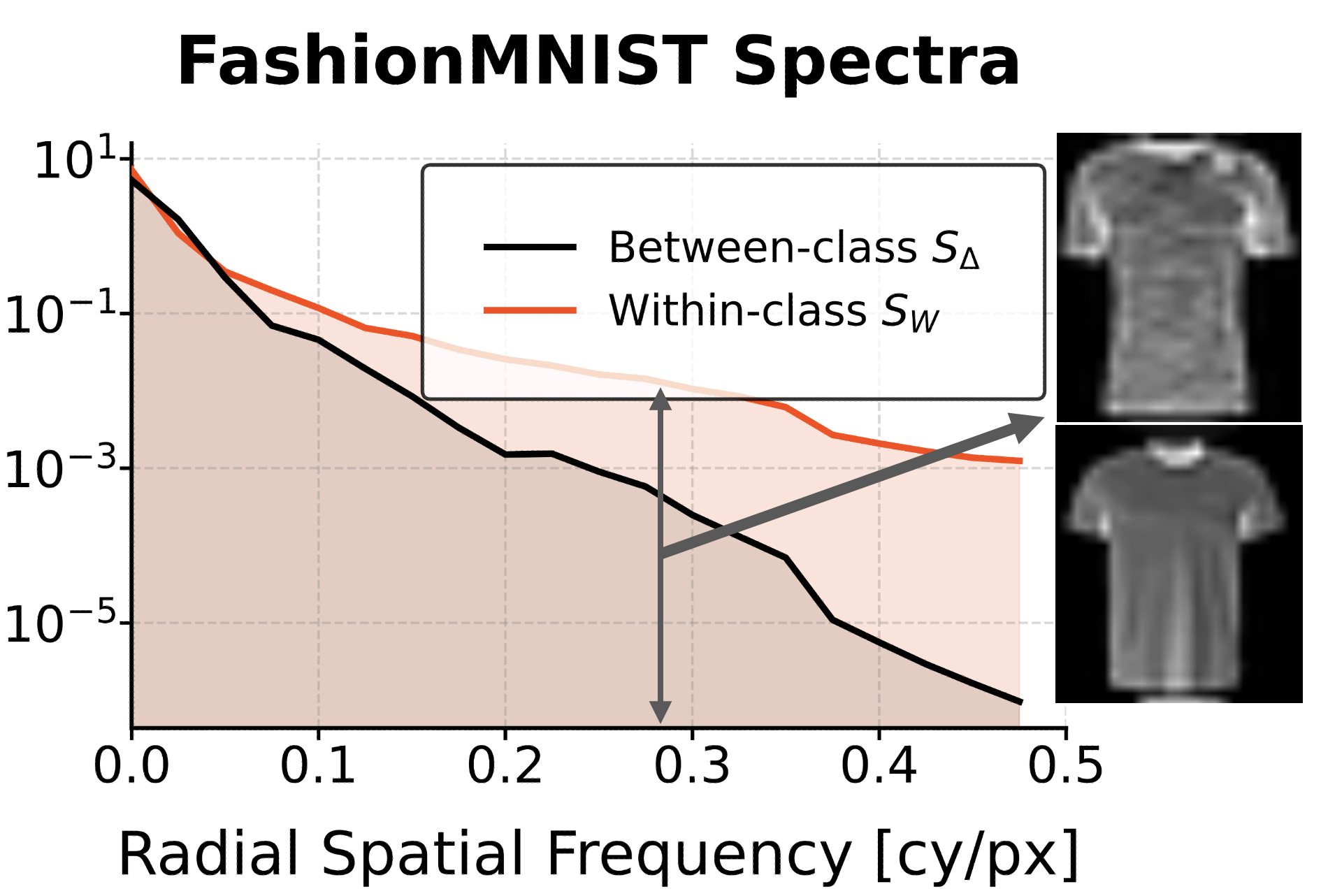}}%
\hfil
\subfloat[]{\includegraphics[width=2.4in]{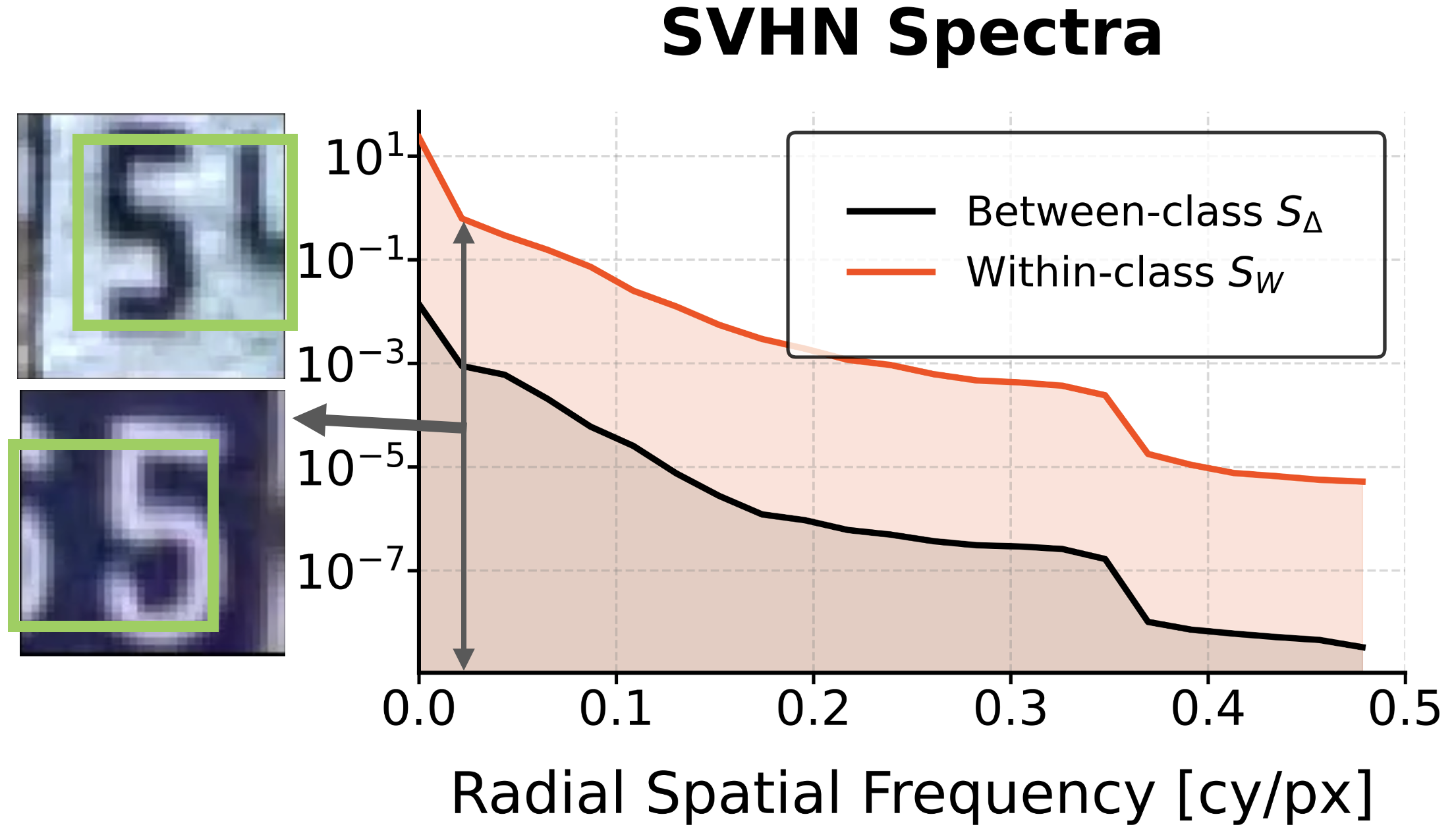}}%
\caption{\textbf{ Masked-readout classification on MNIST, FashionMNIST, and SVHN}. (a) Test accuracy as a function of within-block readout side length $s$ for MNIST, FashionMNIST, and SVHN. For each dataset, the detector is partitioned into non-overlapping $k\times k$ blocks with $N'=4$ ($k=256, N=1024$), and the readout sums a centered $s\times s$ region within each block. A fixed conventional focusing lens with a trained neural-network classifier is compared against an end-to-end optimized system in which the optical phase mask and neural-network classifier are jointly trained. Results show mean test accuracy across five random seeds; shaded regions indicate $\pm 1$ standard deviation. (b)–(d) Azimuthally averaged radial profiles of the between-class spectrum 
$S_{\Delta}(\omega)$ and within-class spectrum 
$S_{W}({\omega})$ for MNIST, FashionMNIST, and SVHN, respectively}
\label{fig:additional_datasets}
\end{figure*}

\subsection{Benchmark Datasets: MNIST, FashionMNIST, and SVHN}
\label{sec:exp_additional_datasets}

We next evaluate the same hypothesis on standard image classification datasets. The goal is not only 
to measure absolute accuracy but also to determine 
whether the regime distinction observed in the synthetic 
setting persists for more complex visual data. Across 
all datasets, we compare conventional lens-based imaging against 
end-to-end optimized passive optical preprocessing under varying 
detector readout constraints, using deep neural networks 
as the classifier to accommodate highly nonlinear 
decision boundaries.  The fine-grid resolution for these simulations is $N=1024$.

Guided by the binary and $K=4$ toy experiments, we set 
detector resolution $N' = 4$ to operate in a strongly 
detector-limited regime where differences between 
conventional and jointly optimized systems are pronounced. 
This choice also yields a large block size $k$, enabling 
a broad sweep over within-block readout size $s$ and a 
clear interpolation between sparse and block-sum readout. At greater detector resolutions, the conventional baseline becomes stronger and the margin for end-to-end improvement correspondingly shrinks, as illustrated by a $N'=8$ masked readout sweep in Fig.~\ref{fig:k128mnist}, where the baseline is already performing relatively well at low noise.

To test whether the masked-readout trends persist on standard vision
benchmarks, we evaluate MNIST, FashionMNIST, and SVHN \cite{Deng2012,lecun1998mnist,xiao2017fashionmnistnovelimagedataset, goodfellow2014svhn} under the same
detector-limited setting. For each dataset and each $s$, we use the cross-entropy loss as our training objective. We report mean test
accuracy across five random seeds in
Fig.~\ref{fig:additional_datasets}a. Implementation details are provided in Appendix~\ref{app:additional_dataset_details}.

\paragraph{Between-class and within-class spectra}
The analysis from Section~\ref{sub:gaussian_setup} to Section~\ref{sec:detector_configs} 
expresses separability in terms of two Fourier-domain quantities: the 
class-mean difference \(\Delta\widehat{\mu}[\ell]\) and the within-class 
spectrum \(\widehat{S}[\ell]\). To connect this to the benchmark datasets, 
we estimate empirical multiclass analogs of these quantities---the 
between-class spectrum \(S_{\Delta}\) and the within-class spectrum 
\(S_W\)---directly from the training data. For each class \(c \in \{1,\dots,K\}\), let
\(\mu_c \in \mathbb{R}^{H \times W}\) denote the empirical class mean image, and let
\(\widehat{\mu}_c(\boldsymbol{\omega})=\mathcal{F}\{\mu_c\}(\boldsymbol{\omega})\)
denote its two-dimensional discrete Fourier transform. We define the between-class spectrum as
\begin{equation*}
S_{\Delta}(\boldsymbol{\omega})
=
\frac{1}{2K^2}
\sum_{i=1}^{K}\sum_{j=1}^{K}
\left|
\widehat{\mu}_i(\boldsymbol{\omega}) -
\widehat{\mu}_j(\boldsymbol{\omega})
\right|^2 .
\end{equation*}
The magnitude-squared term gives the spectral power of each pairwise class-mean difference at frequency \(\boldsymbol{\omega}\); averaging over all class pairs summarizes which spatial frequencies carry class-discriminative structure.

For the within-class spectrum, let \(x \sim p(x \mid C=c)\) denote a sample from
class \(c\), with residual \(r_c = x - \mu_c\) and
\(\widehat{r}_c(\boldsymbol{\omega}) = \mathcal{F}\{r_c\}(\boldsymbol{\omega})\).
The within-class spectrum is
\begin{equation*}
S_W(\boldsymbol{\omega})
=
\sum_{c=1}^{K} \pi_c \,
\mathbb{E}\!\left[
\left|
\widehat{r}_c(\boldsymbol{\omega})
\right|^2
\,\middle|\, C=c
\right],
\end{equation*}
where \(\pi_c = P(C=c)\) is the class prior---the probability that a sample belongs to class \(c\). The prior weights each class by its frequency in the data, so classes that occur more often contribute more to the aggregate within-class spectrum. We use uniform priors \(\pi_c = 1/K\) in our experiments, so each class contributes equally.

To summarize these spectra as a function of spatial-frequency magnitude alone, we compute their azimuthally averaged radial profiles, yielding 1D curves $S_{\Delta}(\omega)$ and $S_W(\omega)$ versus normalized radial frequency $\omega$. We emphasize that the underlying images and spectra are not generally radially symmetric; the azimuthal average is used only as a compact visual summary of how between-class and within-class energy are distributed across spatial frequency scales.

The performance trends in Fig.~\ref{fig:additional_datasets} are also consistent with how the expected MTF of a learned phase mask can suppress higher-frequency nuisance content, so it preferentially preserves low-frequency class-discriminative structure while attenuating higher-frequency within-class variation. For MNIST, between-class differences are concentrated at low frequencies, while much of the within-class energy lies at higher frequencies, so the optimized optics can suppress nuisance variation with little loss of discriminative content. For FashionMNIST, the jointly optimized system likewise improves over the conventional lens baseline. A plausible explanation, consistent with the spectral profiles, is that although within-class variability remains substantial even in the low-frequency regime where between-class differences are strongest, suppressing higher-frequency within-class content is still beneficial under detector-limited readout. For SVHN, the within-class spectrum remains high in the same low-frequency band where between-class differences are concentrated, leaving little room for low-pass optical shaping to suppress nuisance variation without also attenuating discriminative content; the framework therefore predicts a smaller gain from optimization, consistent with the observed result. While the separability measure $d^2$ was derived in the synthetic Gaussian setting, the qualitative trends it predicts are reflected empirically by deep classifiers on these benchmark datasets, suggesting that the underlying spectral mechanism extends beyond the strict Gaussian model.

\subsection{Effect of Detector Noise: MNIST}
As in Section~\ref{sub:binary_noise}, we study the effect of detector noise on the relative performance of end-to-end optimization and a focusing lens. Fig.~\ref{fig:mnist_noise} shows the same overall tendency as in the toy binary experiment: increasing detector noise reduces the gain from joint optimization, especially at small readout side lengths \(s\). As detector noise increases, the absolute accuracy of both the conventional and jointly optimized systems decreases, and the performance gap between them also shrinks, especially at small \(s\), because reduced measurement SNR limits the separability improvement produced by optical co-design. For each noise level, we scaled the noise at each fine-grid pixel such that the noise standard deviation was \{0.5\%, 1\%, 2\%, 5\%\} of the mean object intensity per fine-grid pixel, using the same per-unit-area noise convention as in Section~\ref{sub:binary_noise}. For all the noise levels plotted, the gap is largest at intermediate \(s\), yielding the same qualitative behavior as in Fig.~\ref{fig:binary_noise}.

\begin{figure}
    \centering
    \includegraphics[width=0.9\linewidth]{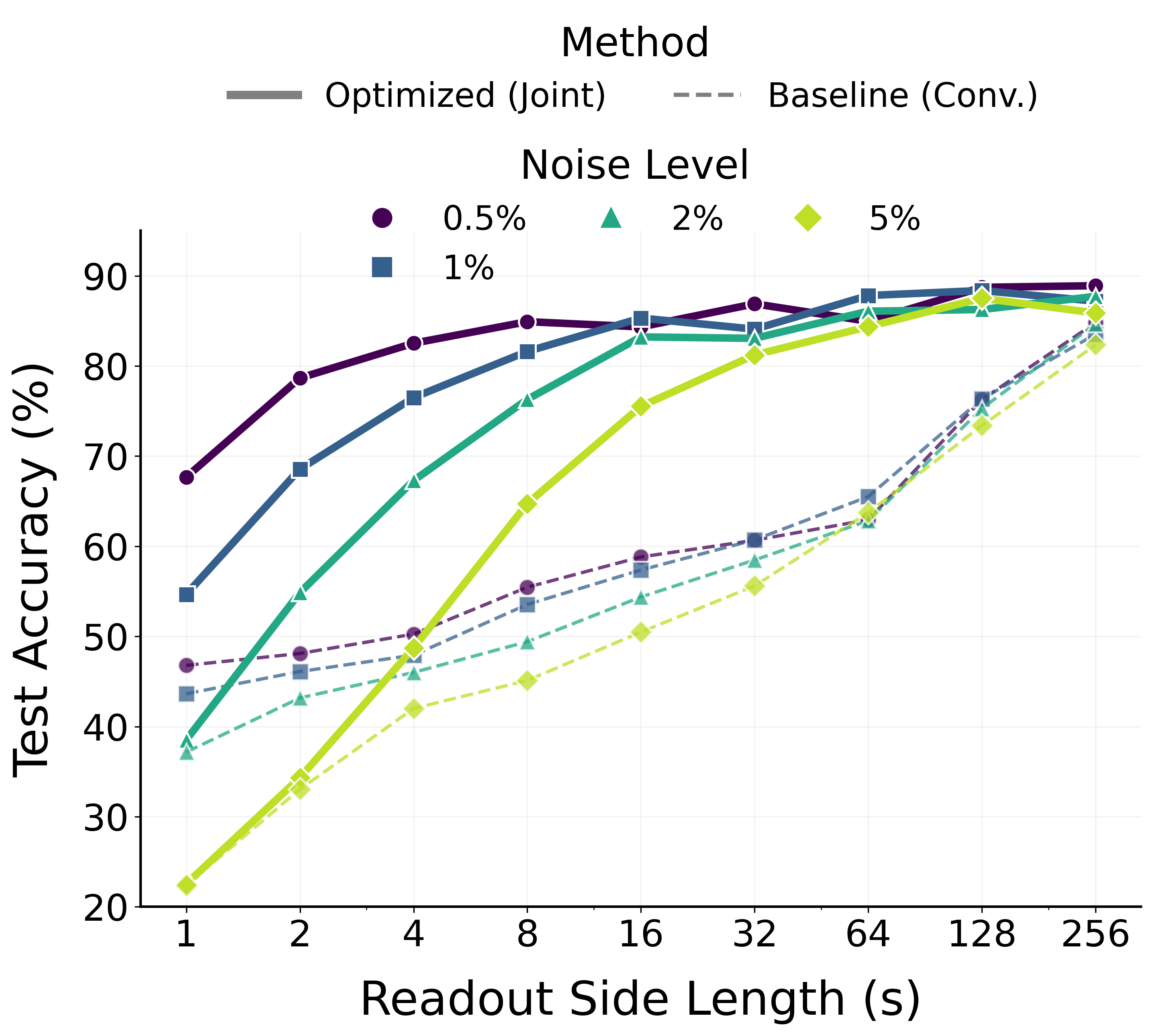}
    \caption{Accuracy versus within-block readout side length $s$ for conventional and jointly optimized systems across detector-noise levels. Dashed lines denote joint optimization and solid lines denote the conventional lens; marker/color indicates noise level. Increasing detector noise reduces the accuracy advantage of joint optimization, especially at small $s$.}
    \label{fig:mnist_noise}
\end{figure}

\section{Discussion and Conclusion}

Our results offer a framework for understanding when end-to-end optimization of optical--computational systems is beneficial for object classification, with particular emphasis on detector-side constraints. We prove that under full detector readout (Section~\ref{sub:mi_proof}), no passive incoherent phase mask exceeds the ideal-channel mutual information between detector measurements and class labels. We approximate this ceiling in our simulations with a conventional focusing lens and verify empirically that joint optimization yields no gain in this regime. Under detector-limited readout, the absolute performance of the conventional baseline can degrade more sharply, whereas the jointly optimized system can preserve more class-discriminative information through task-aware measurement formation. Consequently, the performance gap between the two systems is largest in detector-limited, high-SNR regimes and decreases as detector readout becomes less constrained or detector noise increases. The mechanism we identify also depends on the spectral 
structure of the task: co-design provides the largest 
gains when class-discriminative content sits at lower 
spatial frequencies than within-class variation, a 
condition we verify on the benchmark datasets and probe 
directly with frequency-shifted controls. For more complex natural-image datasets, we expect the gains from this pipeline to be smaller when classification depends on fine edges, textures, or other high-frequency details. In that setting, high-frequency content may carry task-relevant information rather than primarily nuisance variation, reducing the benefit of the low-pass filtering behavior observed in our optimized phase masks. Richer optical front ends, such as non-local metasurfaces, may provide more flexible ways to mix and route image information onto the detector~\cite{Shastri2022}.

More broadly, the present analysis isolates detector-side degrees of freedom within a passive, monochromatic, shift-invariant, incoherent imaging model. In more general optical systems, additional task-relevant degrees of freedom may be available through wavelength, polarization, temporal modulation, or partial coherence. Extending the present framework to incorporate these modalities is a promising direction for future work. Overall, these results suggest that the main value of learned passive optics in the incoherent setting lies not in improving unconstrained imaging, but in enabling task-aware measurement formation under sensing constraints. This principle provides a foundation for designing future 
optical--computational classification systems in which co-design 
reduces the sensing burden required for task performance. In settings 
where dense detector readout, digitization, or data transfer impose 
latency, bandwidth, or energy constraints, learned optical front ends 
can enable accurate inference from fewer or coarser measurements---
making optical co-design most valuable when the cost of acquiring and 
moving pixels dominates the cost of computation.

\section*{Acknowledgments}
The authors gratefully acknowledge insightful discussions with Zhuo Chen. Archer Wang and Joshua Chen are supported by the NSF
graduate research fellowship under Grant Number 2141064. This material is based upon work sponsored by the U.S. Army DEVCOM ARL Army Research Office through the MIT Institute for Soldier Nanotechnologies under Cooperative Agreement number W911NF-23-2-0121. Research was sponsored by the Department of the Air Force Artificial Intelligence Accelerator and was accomplished under Cooperative Agreement Number FA8750-19-2-1000. The views and conclusions contained in this document are those of the authors and should not be interpreted as representing the official policies, either expressed or implied, of the Department of the Air Force or the U.S. Government. The U.S. Government is authorized to reproduce and distribute reprints for Government purposes notwithstanding any copyright notation herein. The authors used OpenAI's ChatGPT, Anthropic's Claude, and Google's Gemini to assist with manuscript editing,
wording suggestions, and preparation of selected plotting
scripts. The authors reviewed and edited all AI-assisted content and take full
responsibility for the final manuscript.

{
\appendices

\section{Proof of Theorem~\ref{thm:mi}}
\label{appendix:thms}

\begin{proof}
\label{app:theorem1-proof}

Since $h[n] \ge 0$ and $\sum_n h[n] \le 1$, Young's 
convolution inequality~\cite{bogachev2007measure1} gives 
$\|A\|_2 \le 1$, equivalently 
$I - AA^\top \succeq 0$~\cite{horn2012matrix}. We may 
therefore decompose the noise as
\[
\epsilon \stackrel{d}{=} A\eta + \zeta, 
\quad \eta \sim \mathcal{N}(0, \sigma^2 I), 
\quad \zeta \sim \mathcal{N}\bigl(0,\, 
\sigma^2 (I - AA^\top)\bigr),
\]
with $\eta \perp \zeta$. Then 
$AX + \epsilon = A(X + \eta) + \zeta$, yielding the 
Markov chain
\[
C \;\to\; X \;\to\; X + \eta \;\to\; A(X + \eta) 
\;\to\; A(X + \eta) + \zeta = Y.
\]
By the data-processing inequality,
$
I(C; Y) \;\le\; I(C; X + \eta) 
\;=\; I(C; X + \epsilon)$.
\end{proof}

\section{Separability Under Detector-Limited Readouts}
\label{app:exact_alias}

Here, we derive the exact finite-dimensional separability
expressions for detector downsampling using the discrete Fourier transform, without assuming that the discriminative spectrum is confined to
the baseband. We specify the detector readout 
operator referenced in the main text, then derive 
the per-bin separability expression and its endpoint 
cases.

\subsection{Detector Readout Operator}
\label{app:detector-operator}

The detector readout operator $D_W$ introduced in 
Section~\ref{sec:detector_binning} maps the fine-grid 
intensity $Y_{\text{fine}} \in \mathbb{R}^{N \times N}$ 
to a coarse-grid measurement array of size 
$N' \times N'$, where $N' = N/k$. Concretely, for 
$(p,q) \in \{0, \ldots, N'-1\}^2$,
\begin{equation}
(D_W Y_{\text{fine}})[p,q] := 
\sum_{i=0}^{k-1} \sum_{j=0}^{k-1} W[i,j]\,
Y_{\text{fine}}[pk+i, qk+j],
\label{eq:DW_definition}
\end{equation}
so all $k^2$ fine-grid pixels within each block 
contribute to the measurement under the weighting 
mask $W \in \mathbb{R}^{k \times k}$. The three readout 
schemes used in the main text correspond to different 
choices of $W$: $W \equiv 1$ yields block-sum readout; 
a one-hot $W$ yields sparse readout; a centered 
$s \times s$ indicator ($s \le k$) yields masked 
readout. The final measurement is 
$Y = D_W Y_{\text{fine}} + \epsilon$ with 
$\epsilon[p,q] \stackrel{\text{i.i.d.}}{\sim} 
\mathcal{N}(0, \sigma^2)$.

\subsection{Channel Model Per Frequency Bin}
\label{app:exact_alias_channel}

Consider a 1D signal of length \(N\), with downsampling factor \(k\) such that
\(k \mid N\). Let \(\widehat{X}[\ell]\) denote the fine-grid DFT coefficients,
for \(\ell=0,1,\dots,N-1\), and let \(\widehat{Y}[m]\) denote the DFT
coefficients on the coarse grid of length \(N/k\), for
\(m=0,1,\dots,N/k-1\).

For each coarse-grid frequency bin \(m\), downsampling by a factor \(k\)
aliases together the \(k\) fine-grid bins
\[
\ell_r := m + r\frac{N}{k}, \qquad r=0,1,\dots,k-1.
\]

Let \(\widehat{H}[\ell]\) denote the incoherent OTF sampled on the fine-grid
DFT frequencies, and let \(\widehat{q}_r[m]\) denote the detector-dependent
weight applied to the aliased branch \(\ell_r\). Define the combined
detector--optics factor $
g_r[m] := \widehat{q}_r[m]\widehat{H}[\ell_r]$.

Then the coarse-grid scalar channel at bin \(m\) is
\begin{equation}
\widehat{Y}[m]
=
\frac{1}{k}\sum_{r=0}^{k-1} g_r[m]\widehat{X}[\ell_r] + \widehat{N}[m],
\label{eq:exact_alias_channel_dft}
\end{equation}
where \(\widehat{N}[m]\sim \mathcal{N}_{\mathbb C}(0,\sigma^2)\) is additive
measurement noise.

Assume that, conditioned on class \(C=c\), the fine-grid DFT coefficients are
complex Gaussian: $
\widehat{X}[\ell]\mid C=c \sim
\mathcal{N}_{\mathbb C}\!\big(\widehat{\mu}_c[\ell],\,\widehat{S}[\ell]\big)$,
with shared within-class spectrum \(\widehat{S}[\ell]\) across classes. Define
the Fourier-domain mean difference $
\Delta\widehat{\mu}[\ell] := \widehat{\mu}_1[\ell]-\widehat{\mu}_0[\ell]$.

Under the diagonal Fourier-domain covariance assumption, the aliased
coefficients \(\{\widehat{X}[\ell_r]\}_{r=0}^{k-1}\) are uncorrelated within
each class. Hence, from \eqref{eq:exact_alias_channel_dft}, the class-conditional
mean difference at coarse-grid bin \(m\) is
\[
\Delta\widehat{\mu}_Y[m]
=
\frac{1}{k}\sum_{r=0}^{k-1} g_r[m]\Delta\widehat{\mu}[\ell_r],
\]
and the corresponding within-class variance is
\[
\operatorname{Var}(\widehat{Y}[m])
=
\frac{1}{k^2}\sum_{r=0}^{k-1}|g_r[m]|^2 \widehat{S}[\ell_r] + \sigma^2.
\]

Therefore, the exact per-bin contribution to the squared Mahalanobis distance is
\begin{equation}
\widehat{d}^2[m]
=
\frac{\left|\sum_{r=0}^{k-1} g_r[m]\Delta\widehat{\mu}[\ell_r]\right|^2}
{\sum_{r=0}^{k-1}|g_r[m]|^2 \widehat{S}[\ell_r] + k^2\sigma^2 }.
\label{eq:exact_d2_general_dft}
\end{equation}

Moreover, because the Fourier-domain covariance is diagonal and the alias sets
associated with distinct coarse-grid bins \(m\) are disjoint, the coarse-grid
measurements \(\{\widehat{Y}[m]\}_{m=0}^{N/k-1}\) are uncorrelated. Since
\(\widehat{Y}\) is jointly Gaussian, these components are therefore independent,
so the covariance matrix \(\widehat{\Sigma}_Y\) is diagonal. In the
complex-valued setting, the squared Mahalanobis distance is $
d^2
=
(\widehat{m}_1-\widehat{m}_0)^*
\widehat{\Sigma}_Y^{-1}
(\widehat{m}_1-\widehat{m}_0)$,
where \({}^*\) denotes conjugate transpose. Since \(\widehat{\Sigma}_Y\) is
diagonal, this decomposes across coarse-grid frequency bins as
\[
d^2
=
\sum_{m=0}^{N/k-1}
\frac{|\Delta \widehat{m}[m]|^2}{\operatorname{Var}(\widehat{Y}[m])}
=
\sum_{m=0}^{N/k-1} \widehat{d}^2[m].
\]

\subsection{Explicit Expressions For Detector-Limited Readout}
\label{app:exact_endpoints}

The general expression in Eq.~\eqref{eq:exact_d2_general_dft} specializes to
masked readout and to its two endpoints: block-sum and sparse readout.

\subsubsection{Masked Readout}
For an \(s\)-pixel contiguous readout window, ignoring the unit-modulus phase
factor set by the window position, the frequency response on aliased branch
\(\ell_r\) is
\[
\widehat q_{r,s}[m]
=
\sum_{t=0}^{s-1} e^{-j2\pi \ell_r t/N}
=
e^{-j\pi \ell_r(s-1)/N}
\frac{\sin(\pi s\ell_r/N)}{\sin(\pi \ell_r/N)} ,
\]
with the ratio defined by continuity at \(\ell_r=0\). As \(s\) increases, this
response becomes more concentrated near low spatial frequencies.

Writing \(\widehat q_{r,s}[m]=s\,\widetilde q_{r,s}[m]\), and using the
per-unit-area noise convention \(\sigma_s^2=s^2\sigma_{\mathrm{fine}}^2\),
Eq.~\eqref{eq:exact_d2_general_dft} becomes
\begin{equation}
\widehat d_s^2[m]
=
\frac{\left|\sum_{r=0}^{k-1} \widetilde q_{r,s}[m]\,
\widehat H[\ell_r]\Delta\widehat\mu[\ell_r]\right|^2}
{\sum_{r=0}^{k-1} |\widetilde q_{r,s}[m]|^2
|\widehat H[\ell_r]|^2\widehat S[\ell_r]
+
k^2\sigma_{\mathrm{fine}}^2}.
\label{eq:d2_s_normalized}
\end{equation}

\subsubsection{Block-Sum Readout}
\label{app:exact_sum_pool}

Block-sum readout is the endpoint \(s=k\). Its frequency response is
\[
\widehat{B}_k[\ell]
=
\sum_{t=0}^{k-1} e^{-j 2\pi \ell t / N}
=
e^{-j\pi \ell (k-1)/N}
\frac{\sin(\pi k \ell / N)}{\sin(\pi \ell / N)} .
\]
Thus \(\widehat q_r[m]=\widehat B_k[\ell_r]\), giving
\begin{equation*}
d^2_{\mathrm{block}}[m;H]
=
\frac{\left|\sum_{r=0}^{k-1} \widehat{B}_k[\ell_r]\widehat{H}[\ell_r]\Delta\widehat{\mu}[\ell_r]\right|^2}
{\sum_{r=0}^{k-1}|\widehat{B}_k[\ell_r]|^2 |\widehat{H}[\ell_r]|^2 \widehat{S}[\ell_r] + k^2\sigma^2 }.
\end{equation*}

\subsubsection{Sparse Readout}
\label{app:exact_sparse}

Sparse readout is the endpoint \(s=1\). Choosing the measured pixel as the
zero-offset location sets \(\widehat q_r[m]=1\), and other offsets only add
unit-modulus phase factors. Therefore,
\begin{equation*}
d^2_{\mathrm{sparse}}[m;H]
=
\frac{\left|\sum_{r=0}^{k-1} \widehat{H}[\ell_r]\Delta\widehat{\mu}[\ell_r]\right|^2}
{\sum_{r=0}^{k-1}|\widehat{H}[\ell_r]|^2 \widehat{S}[\ell_r] + k^2\sigma^2 }.
\end{equation*}

\section{Experimental Details}
\subsection{Physical imaging model}
The forward imaging model is implemented via angular-spectrum propagation\cite{Matsushima2009BLAS}, and the wavelength is set to $550\mathrm{nm}$. 
The imaging region spans $1\,\mathrm{mm} \times 1\,\mathrm{mm}$. The pupil function is expressed as $P(x,y)=A(x,y)e^{i\phi_\theta(x,y)}$, where $A(x,y)$ is a binary circular aperture of diameter $0.5\mathrm{mm}$, and $\phi_\theta(x,y)$ is a phase profile parameterized by $j_{\max}=30$ Zernike coefficients. This captures the low-order spectral reshaping predicted by our analysis while remaining physically realizable. The object-to-lens and lens-to-detector distances are both
$5\,\mathrm{mm}$, corresponding to $1\times$ magnification.

\subsection{Toy Binary Experiment Details}
\label{app:toy_binary_details}

Here, we set the detector resolution to $N=256$
and use synthetic datasets with
$n_{\mathrm{train}}=20000$ training samples and
$n_{\mathrm{val}}=4000$ validation samples. Each object image
$x\in\mathbb{R}^{N\times N}$ is drawn from one of two equiprobable
classes with shared covariance,
$x \mid C=c \sim \mathcal N(\mu_c,\Sigma)$,
where $\mu_0$ and $\mu_1$ differ by a difference-of-Gaussians
pattern defined on a normalized $[-1,1]\times[-1,1]$ grid. The pattern is constructed from two spatial Gaussian components,
\[
g_{\pm}(x,y)
=
\exp\!\left(
-\frac{(x\mp 0.35)^2+y^2}{2\sigma^2}
\right),
\qquad
\sigma=0.12,
\]
with \(e=g_{-}-g_{+}\). The two class means are then defined as
\[
\mu_0 = b + \frac{\alpha}{2}e,
\qquad
\mu_1 = b - \frac{\alpha}{2}e,
\]
where \(b=0.6\) and \(\alpha=0.01065\). Thus \(\alpha\) is the scale factor
for the inter-class difference pattern \(\mu_0-\mu_1=\alpha e\), while each
class mean is displaced from the base intensity by \((\alpha/2)e\).


We model $\Sigma$ using a WSS approximation, represented by a radially symmetric Gaussian
power spectral density
\[
S_{\mathrm{in}}(\omega_x,\omega_y)
=
\sigma_x^2 \exp\!\left(
-\frac{\omega_x^2 + \omega_y^2}{2\omega_0^2}
\right),
\]
with $\sigma_x=0.20$ and $\omega_0=1.2$. Additive WSS covariance is
introduced at the object plane and propagated through the
imaging system. After incoherent optical filtering with
point-spread function $h$ and transfer function $H$, the
full-resolution measurement PSD is $
S_0(\omega_x,\omega_y)
=
|H(\omega_x,\omega_y)|^2
S_{\mathrm{in}}(\omega_x,\omega_y)$.

Under the WSS and circulant approximation, the covariance diagonalizes in the Fourier domain, allowing efficient evaluation of $d^2$.
\begin{equation}
d^2 =
\sum_{\omega_x,\omega_y}
\frac{|\Delta M(\omega_x,\omega_y)|^2}
{S_0(\omega_x,\omega_y) + \lambda},
\label{eq:d2_expression}
\end{equation}
where $\Delta M$ is the Fourier transform of $\Delta m$, and
$\lambda$ is a small ridge term added for numerical stability.

For detector aggregation, we model the sensor readout as
weighted block pooling within $k\times k$ regions followed by
decimation. The corresponding detector response is incorporated
in the frequency domain, and the resulting noise PSD on the
coarse grid is obtained by applying the pooling filter and
performing 2D alias-folding into the baseband. $d^2$ is then computed on this coarse grid using Eq.~\ref{eq:d2_expression}.

For the optical front end, we use a phase-only lens model whose
phase profile is parameterized by Zernike coefficients with
$j_{\max}=30$. The resulting PSF is used both in the forward
simulation and in the computation of the separability metric.

We study two detector-limited regimes.

\paragraph*{1) Block-sum detector resolution sweep.}
We vary the detector resolution
$N'\in\{2,4,8,16,32\}$,
with block size $k=N/N'$, using uniform
$k\times k$ sum-binning and decimation. For each detector
resolution, we compute the corresponding $d^2$ using the 2D
box response and alias-folding model. To validate the predicted
trend, we also train a linear classifier on simulated detector
measurements. Specifically, the
$N'\times N'$ outputs are flattened,
normalized with BatchNorm (using running statistics), and passed
to a logistic regressor trained with Adam (learning rate
$10^{-2}$), $\ell_2$ penalty $10^{-4}$, and early stopping on
validation accuracy for up to 30 epochs.

\paragraph*{2) Masked-readout sweep.}
We fix $N'=16$, so that $k=16$, and vary the
detector readout budget by selecting a centered $s\times s$
region within each $k\times k$ block and summing the responses
of those pixels. We sweep $
s\in\{1,2,4,8,16\}$.
Thus $s=1$ corresponds to sparse readout and $s=16$ recovers
full sum-pooling within each block.

In both regimes we compare a fixed conventional lens against
end-to-end joint optimization of the optical phase together
with the BatchNorm-plus-logistic-regression classifier. Joint
optimization is performed with Adam (learning rate $10^{-3}$)
for 20 epochs with batch size 128. For each setting we report
both the theoretical separability proxy $d^2$ and the validation
classification accuracy.

\subsection{Frequency-Shifted Binary Synthetic Controls}
\label{app:freq_shifted_binary}

We also ran a frequency-shifted variant of the binary synthetic experiment to test whether the benefit of optical co-design depends on where the class-discriminative signal lies in spatial frequency. This experiment uses the same optical model, WSS covariance, detector readout, noise model, classifier, and training procedure as Appendix~C-B. The only change is the class-mean difference.

Let
\begin{equation*}
e(x,y)
=
\exp\!\left(-\frac{(x+x_0)^2+y^2}{2\sigma_b^2}\right)
-
\exp\!\left(-\frac{(x-x_0)^2+y^2}{2\sigma_b^2}\right),
\end{equation*}
with $x_0=0.35$ and $\sigma_b=0.12$, denote the original two-lobed pattern. The low-frequency condition uses $e(x,y)$ directly. For the mid- and high-frequency conditions, we multiply this same envelope by a sinusoidal carrier,
\begin{equation*}
e_f(x,y) = e(x,y)\cos(2\pi f x).
\end{equation*}
The class means are then
\begin{equation*}
\mu_0^{(f)} = b + \frac{\alpha}{2}\tilde e_f,
\qquad
\mu_1^{(f)} = b - \frac{\alpha}{2}\tilde e_f,
\end{equation*}
where $b=0.6$ and $\alpha=0.01065$. For $f=0$, this exactly recovers the original low-frequency binary experiment. For $f>0$, we rescale the modulated pattern so that
$
\|\tilde e_f\|_2 = \|e\|_2$,
which keeps the total energy of the class-mean difference fixed across the low-, mid-, and high-frequency conditions.

We evaluate these controls in the masked-readout setting with $N=256$, $N'=16$, and $k=16$, sweeping the readout side length $s$ as in Appendix~C-B. Detector noise is fixed to the low-noise setting, with fine-grid noise standard deviation equal to $2\%$ of the mean object intensity per fine-grid pixel. For each frequency condition and readout size, we compare the conventional focusing lens with the jointly optimized phase mask and report both validation accuracy and $d^2$.

\subsection{Additional Dataset Experiment Details}
\label{app:additional_dataset_details}

We evaluate masked-readout trends on MNIST, FashionMNIST, and SVHN.
SVHN is converted to grayscale before simulation. All images are resized
by bilinear interpolation to an $N\times N$ simulator grid, with $N=1024$. We select the coarse-grid resolution to be $N'=4$. We sweep $
s\in\{1,2,4,8,16,32,64,128,256\}, \qquad s\le k$.

For each dataset and each $s$, we compare (i) a fixed conventional lens
and (ii) end-to-end joint optimization of the optical phase and
classifier under the same cross-entropy objective. 

MNIST and FashionMNIST use a grayscale ResNet-18 backbone operating on
the coarse detector measurements. For SVHN, we use a CNN with
approximately $5\times10^5$ parameters and found gradient-based joint-training to be more performant and stable. All models are trained with Adam
(learning rate $10^{-3}$), batch size 64, and test batch size 100 for 3
epochs. Each configuration is repeated over five random seeds
(initialization and minibatch order), and we report mean test accuracy
with $\pm1$ standard deviation.

\begin{figure}[h]
    \centering
    \includegraphics[width=\linewidth]{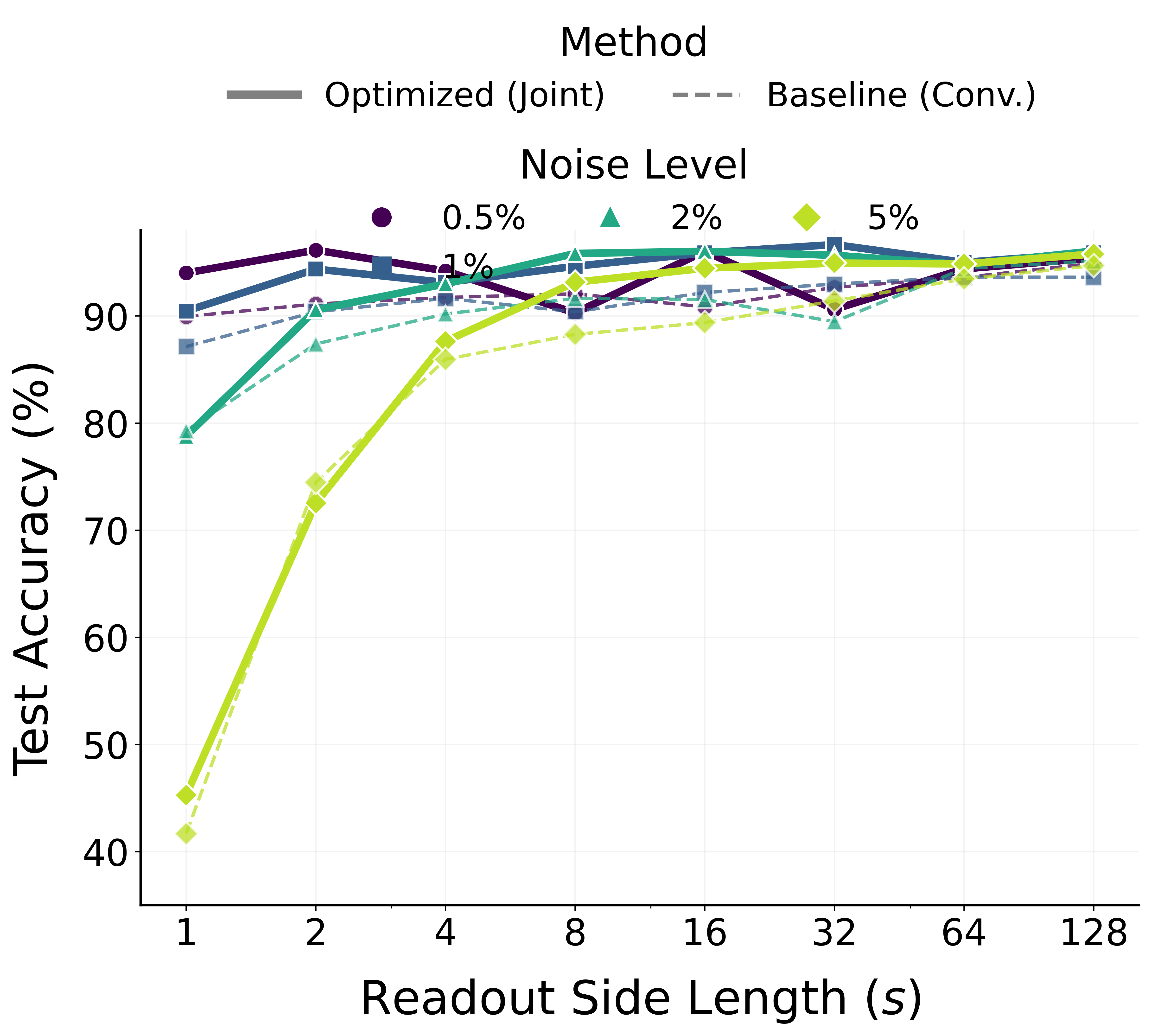}
    \caption{MNIST masked-readout sweep at detector resolution $N'=8$. Accuracy is shown versus readout budget for conventional and jointly optimized systems across detector-noise levels. Compared with the $N'=4$ setting, the performance gap is small, consistent with the conclusion that the benefit of optical co-design decreases as detector resolution increases.}
\label{fig:k128mnist}
\end{figure}
}


 




\vfill

\end{document}


\title{Supplementary Material for\\
``End-to-End Optimization of Incoherent Imaging for Classification Under Detector-Limited Readout''}

\author{Archer Wang, Joshua Chen, Sachin Vaidya, and Marin Solja\v{c}i\'c}

\maketitle

\begin{figure}[!t]
\centering
\subfloat[]{\includegraphics[width=3.3in]{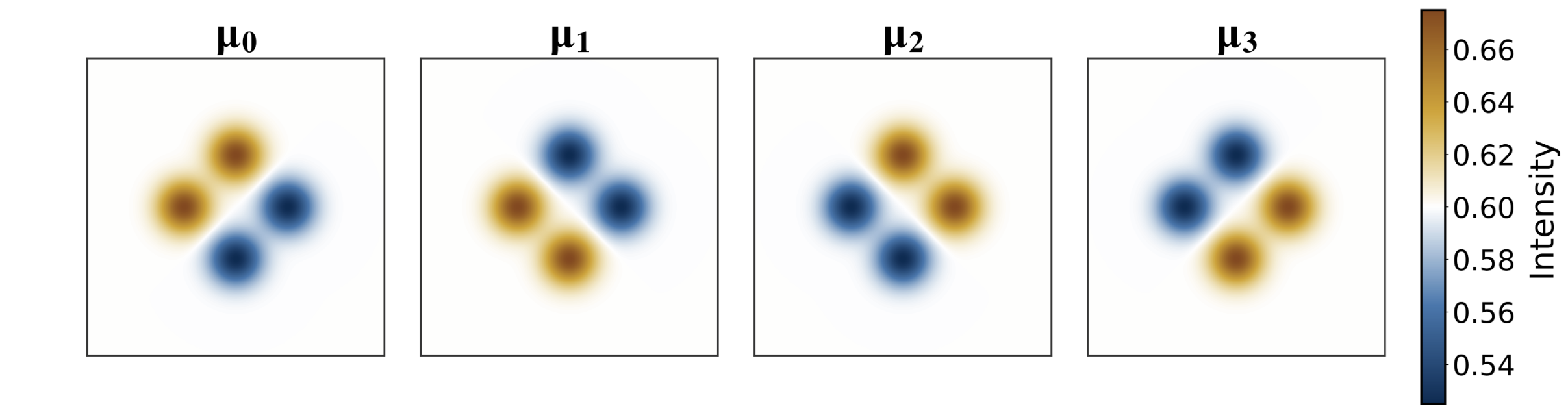}%
\label{fig_first_case}}
\hfil
\subfloat[]{\includegraphics[width=3.3in]{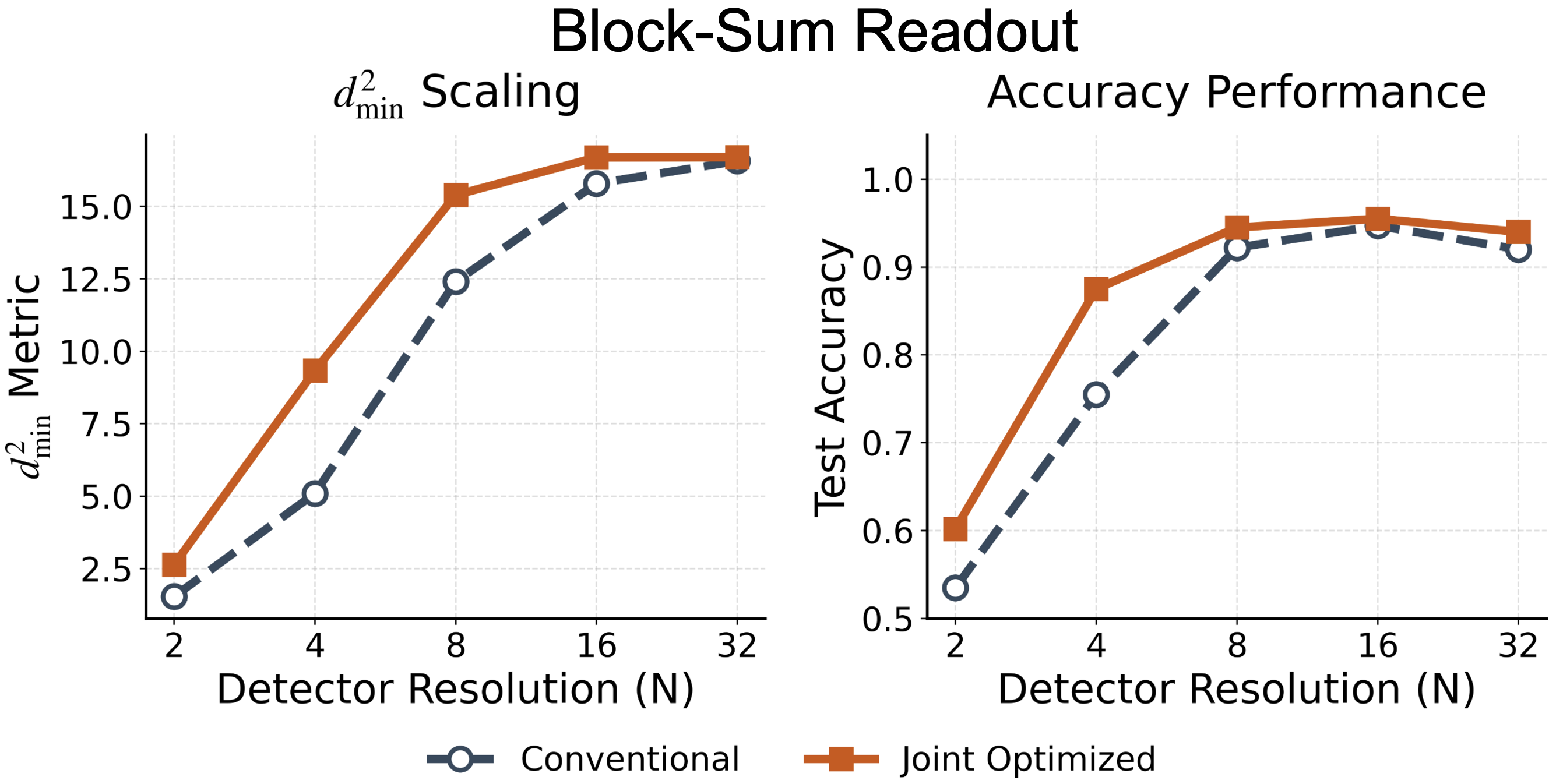}%
\label{fig_second_case}}
\hfil
\subfloat[]{\includegraphics[width=3.3in]{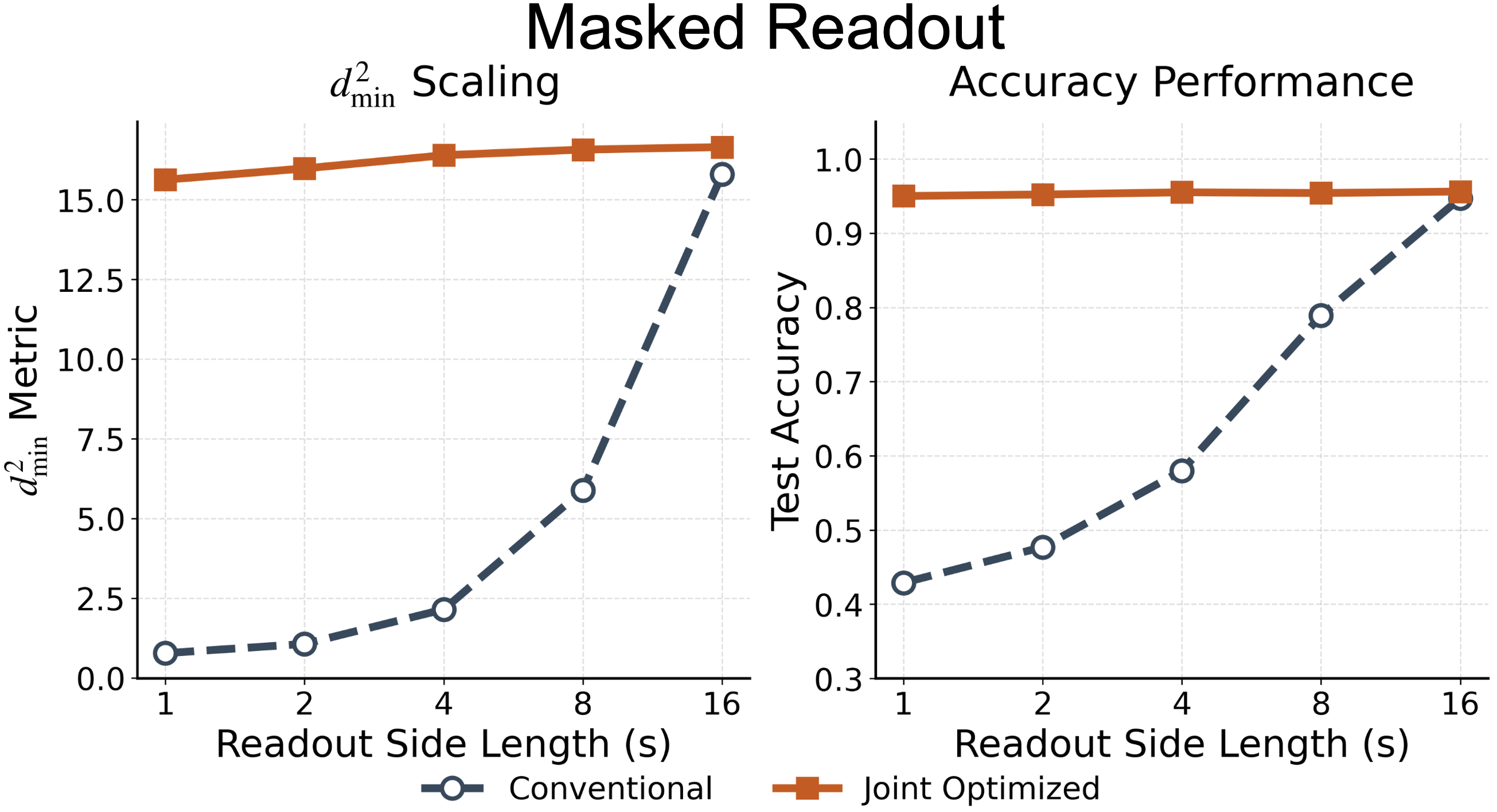}%
\label{fig_second_case}}
\hfil
\subfloat[]{\includegraphics[width=3.3in]{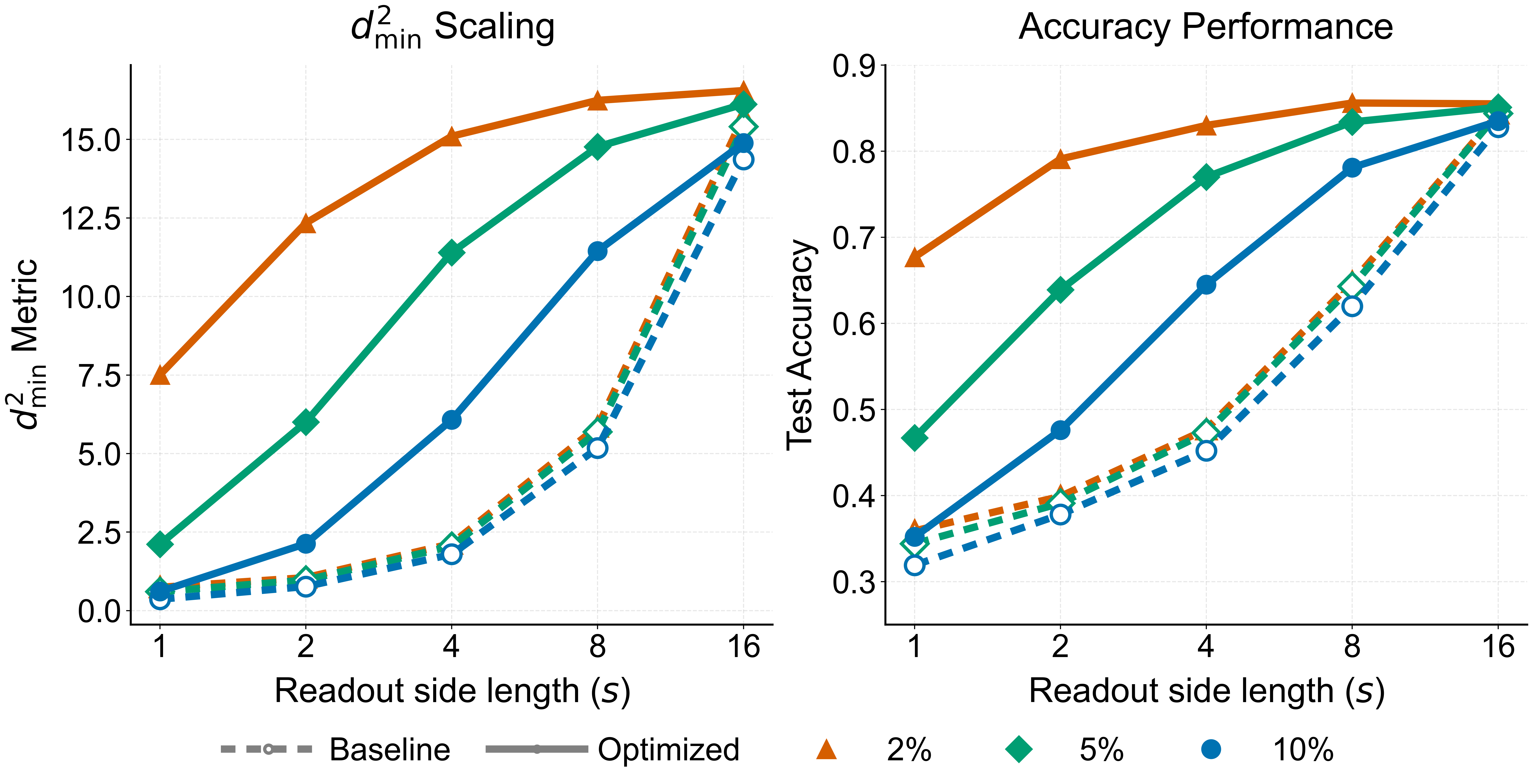}}%
\caption{\textbf{Multiclass ($K=4$) classification under detector-limited readout}.
(a) Four class means in the synthetic Gaussian experiment.
(b) Block-sum readout sweep in the noiseless setting: detector resolution is varied; left, minimum pairwise Mahalanobis distance $d_{\min}^2$; right, test accuracy.
(c) Masked-readout sweep in the noiseless setting: detector resolution is fixed and the within-block readout side length $s$ is varied; left, $d_{\min}^2$; right, test accuracy.
Joint optimization of the optical phase and classifier improves performance over a focusing lens across detector-limited regimes. (d) Consistent with Fig.~4 in the main text (using the same settings as (c)), increasing noise reduces the advantage of joint optimization, especially in the more detector-limited readout regimes.}
\label{fig:k4results}
\end{figure}

\section{Additional Experiment Details}

\subsection{Synthetic Multiclass ($K=4$) Classification Experiment}
\label{sec:toy_m4_supplement}

Now we consider multi-class classification problems, where the samples 
are drawn from $x | C = c \sim N(\mu_c, \Sigma)$ for $c=1,\ldots,K$ 
for $K \ge 3$. We extend our analysis for binary classification to a 
multi-class setting by looking at pairwise $d^2$ distances between 
classes. To verify that the qualitative trends observed in the binary 
case extend beyond two-class discrimination, we consider a four-class 
version of the synthetic Gaussian surrogate. Across our simulations, 
we note the minimum pairwise Mahalanobis distance across all class pairs.

We construct four class means using two approximately orthogonal 
difference-of-Gaussians patterns: a left--right pattern and an up--down 
pattern (Fig.~\ref{fig:k4results}a). Let $(a,b)\in\{\pm1\}^2$ and define 
$\mu_{a,b} = \mu_{\mathrm{base}} + \tfrac12 A\bigl(a\,\mu_{\mathrm{LR}}
+b\,\mu_{\mathrm{UD}}\bigr)$. This yields four classes corresponding to 
the four sign combinations of the two patterns. As shown in 
Fig.~\ref{fig:k4results}b--c, the qualitative trends from the binary 
case carry over to the multiclass setting. As detector resolution 
increases, the absolute performance of the conventional lens baseline 
improves, and the absolute performance of the jointly optimized system 
improves as well. However, because the baseline benefits more strongly 
from increased detector resolution, the performance gap between the two 
systems shrinks.

We use the same covariance model and optical forward model as in the
binary toy experiment. Class means are constructed from left--right and
up--down difference-of-Gaussians templates as described in the main text.
Separability is summarized by the minimum pairwise Mahalanobis distance
over all $\binom{4}{2}$ class pairs.

\section{Additional Theorems and Lemmas}

\begin{lemma}[I--MMSE identity for a scalar AWGN channel \cite{Guo2005}]\label{lem:immse}
Let $X$ be any real-valued random variable with $\mathbb E[X^2]<\infty$, and let $Z\sim\mathcal N(0,1)$ be
independent of $X$. For $\gamma\ge 0$, define the additive white Gaussian noise (AWGN) channel $
Y_\gamma := \sqrt{\gamma}\,X + Z$.
Then the mutual information (in nats) is differentiable in $\gamma$ and satisfies
\[
\frac{d}{d\gamma} I(X;Y_\gamma) \;=\; \frac12\,\mathrm{mmse}(\gamma),
\]
where $
\mathrm{mmse}(\gamma) := \mathbb E\!\left[\bigl(X-\mathbb E[X\mid Y_\gamma]\bigr)^2\right]$.
In particular, $\mathrm{mmse}(\gamma)\ge 0$, so $I(X;Y_\gamma)$ is nondecreasing in $\gamma$; moreover,
if $X$ is non-degenerate (not almost surely constant), then $\mathrm{mmse}(\gamma)>0$ for all finite
$\gamma$, hence $I(X;Y_\gamma)$ is strictly increasing in $\gamma$.
\label{lemma:immse}
\end{lemma}

\begin{theorem}
For binary equiprobable classes with shared covariance, the mutual
information $I(C;Y)$ is a monotonically increasing function of the
squared Mahalanobis distance $d^2$.
\label{thm:monotone}
\end{theorem}

The mutual information takes the closed form
\begin{equation}
    I(C;Y) = \log 2 -
    \mathbb{E}_{T\sim\mathcal{N}\!\left(\frac{1}{2}d^2,\,d^2\right)}
    \!\left[\log\left(1+e^{-T}\right)\right],
\end{equation}

\begin{proof}
Assume two equiprobable classes with shared covariance
\[
Y \mid C=c \sim \mathcal N(\mu_{Y,c},\Sigma_Y), \quad c\in\{0,1\}.
\]

The log-likelihood ratio is
\[
L(y)
= \log\frac{p(y|1)}{p(y|0)}
= (m_1-m_0)^\top \Sigma_Y^{-1}
\!\left(y-\tfrac12(m_0+m_1)\right).
\]

The squared Mahalanobis distance is $
d^2 = \Delta m^\top \Sigma_Y^{-1}\Delta m$ for $\Delta m := m_1-m_0$.

Define the whitened and centered observation
\[
U := \Sigma_Y^{-1/2}\Bigl(Y-\tfrac12(m_0+m_1)\Bigr).
\]

Then
\[
U \mid C=1 = \tfrac12 u + N,
\qquad
U \mid C=0 = -\tfrac12 u + N,
\]
where $N\sim\mathcal N(0,I)$ and
\[
u := \Sigma_Y^{-1/2}(m_1-m_0), \qquad \|u\|^2=d^2 .
\]

The log-likelihood ratio depends on $U$ only through the scalar
projection
\[
S := \frac{u^\top U}{\|u\|},
\]
so $I(C;Y)=I(C;S)$. The induced channel is $
S = \sqrt{\gamma}\,X + W$,
where $X\in\{\pm1\}$ is equiprobable, $W\sim\mathcal N(0,1)$,
and $\gamma=d^2/4$.

By Lemma~\ref{lem:immse}, $
\frac{d}{d\gamma}I(X;S)=\tfrac12\mathrm{mmse}(\gamma)>0$
for all finite $\gamma$. Hence $I(C;Y)$ is strictly increasing in
$d^2$.
\end{proof}

\begin{theorem}[Fano's Inequality \cite{fano1961transmission}]\label{thm:fanos}
Let $X$ be a random variable taking values in a finite set $\mathcal{X}$, and let $\hat{X}$ be an estimate of $X$ based on some observation $Y$. Define the probability of error $
P_e = \mathbb{P}(X \neq \hat{X})$.
Then
\[
H(X \mid Y) \leq h(P_e) + P_e \log(|\mathcal{X}| - 1),
\]
where $h(P_e) = -P_e \log P_e - (1 - P_e)\log(1 - P_e)$ is the binary entropy function.
\end{theorem}
\begin{remark}[Connection from Fano's inequality to mutual information]\label{remark:mutual_info_fano}
Since $
I(X;Y) = H(X) - H(X \mid Y)$,
Fano's inequality implies
\[
I(X;Y) \ge H(X) - h(P_e) - P_e \log\bigl(|\mathcal{X}| - 1\bigr).
\]
Using the looser bound $
H(X \mid Y) \le 1 + P_e \log |\mathcal{X}|$,
we also obtain
\[
P_e \ge \frac{H(X)-I(X;Y)-1}{\log |\mathcal{X}|}.
\]
Thus, if $I(X;Y)$ is small relative to $H(X)$, then the probability of error
cannot be small.
\end{remark}